\newtheorem{theorem}{Theorem}
\newtheorem{lemma}[theorem]{Lemma}
\newtheorem{corollary}[theorem]{Corollary}
\newtheorem{prop}[theorem]{Proposition}
\newtheorem{obs}[theorem]{Observation}
\newcommand{\suppress}[1]{}
\def\rset{\mathbb R}
\def\qed{\mbox{ }~\hfill~$\Box$}
\def\zset{\mathbb Z}
\def\cset{\mathbb C}
\def\qset{\mathbb Q}
\def\ep{\varepsilon}
\def\cU{{\mathcal U}}
\def\cX{{\mathcal X}}
\DeclareMathOperator{\sign}{sign}
\DeclareMathOperator{\Han}{Han}
\newcommand{\be}{\begin{equation}}
\newcommand{\ee}{\end{equation}}
\newcommand{\bea}{\begin{eqnarray}}
\newcommand{\eea}{\end{eqnarray}}
\newcommand{\bean}{\begin{eqnarray*}}
\newcommand{\eean}{\end{eqnarray*}}
\newcommand{\mun}{\mathbold \mu}
\newcommand{\pn}{\mathbold p}
\newcommand{\an}{a}
\def\Floor#1{\left\lfloor #1 \right\rfloor}
\def\Paren#1{\left( #1 \right)}		
\newcommand{\mathcolorbox}[1]{\colorbox{gray!30}{$\displaystyle #1$}}
\newcommand{\mylabel}[2]{#2\def\@currentlabel{#2}\label{#1}}
\newcommand{\rtl}{s}
\newcommand{\fv}{x}
\newcommand{\sv}{y}
\author{Spencer L. Gordon, Manav Kant, Eric Ma, Leonard J. Schulman and Andrei Staicu\thanks{Research supported in part by NSF grants CCF-1909972 and CCF-2321079. The authors would like to acknowledge the Caltech SURF program and the Larson, Mike Stefanko, and SURF Board endowments. The authors are with the Division of Engineering and Applied Science, California Institute of Technology, Pasadena CA 91125 USA (emails: {\tt slgordon, mkant, ema, astaicu, schulman@caltech.edu}).}}
\title{Identifiability of Product of Experts Models}
\begin{document}

\maketitle

\begin{abstract} 
Product of experts (PoE) are layered networks in which the value at each node is an AND (or product) of the values (possibly negated) at its inputs. These were introduced as a neural network architecture that can efficiently learn to generate high-dimensional data which satisfy many low-dimensional constraints---thereby allowing each individual expert to perform a simple task. PoEs have found a variety of applications in learning. 

We study the problem of identifiability of a product of experts model having a layer of binary latent variables, and a layer of binary observables that are iid conditional on the latents. The previous best upper bound on the number of observables needed to identify the model was exponential in the number of parameters. We show: (a) When the latents are uniformly distributed, the model is identifiable with a number of observables equal to the number of parameters (and hence best possible). (b) In the more general case of arbitrarily distributed latents, the model is identifiable for a  number of observables that is still linear in the number of parameters (and within a factor of two of best-possible). The proofs rely on root interlacing phenomena for some special three-term recurrences.
\end{abstract} 


\section{Introduction}
\paragraph{Product of experts models.} 
In modeling complex, high-dimensional data, it is often necessary to combine various simple distributions to produce a more expressive distribution. One way of doing this is the mixture model, or weighted sum of distributions. Alone, however, this still requires quite expressive components, which is a hindrance for modeling data in a high-dimensional space. Product of experts (PoE) were introduced in the neural networks literature as an antidote to this problem: the distribution is factorized into a set of independent lower-dimensional distributions~\cite{Hinton99,Hinton02}. Equivalently, the overall distribution is an AND over the factor distributions (which may themselves be mixture models).

PoEs have recently been applied to solve diverse problems requiring the generation of data that simultaneously satisfy numerous sets of constraints. For example, the PoE-GAN framework has advanced the state of the art in multimodal conditional image synthesis, generating images conditioned on all or some subset of text, sketch, and segmentation inputs~\cite{HuangMWL22}. In the field of language generation, a PoE with two factors, a pre-trained language model and a combination of a toxicity expert with an anti-expert, was used to steer a language model away from offensive outputs~\cite{LiuSLSBSC21}.

However, fundamental questions about even the simplest PoE models remain unresolved. In this paper, we study a PoE in which each observable random variable (rv) $X$ depends upon statistically independent latent rv's $\cU=(U_1,\dotsc, U_{\ell})$. The \emph{model} here is the prior distributions on the $U_j$, and the statistics are the likelihoods $\Pr(X=x|\cU)$. 

We investigate the well-studied class of instances in which the $X_i$ and the $U_j$ are binary and $\Pr(X=1|\cU)$ can be expressed as a product over the $U_j$'s, namely, for some coefficients $\alpha$,
\be \Pr(X=1\mid \cU=u) = \prod_{j=1}^{\ell} \alpha_{j,u_j} \label{prod:struct} \ee
It is worth noting the latent symmetry present in this class, in that the distribution of $X$ conditional on $\mathcal U$ is invariant to permutation of the $\mathcal U_j$'s.

The question with which we are concerned is: what is the minimum $n$ needed so that instances of this class can be identified from the statistics of $n$ independent samples $X_1,\ldots,X_n$? It is necessary to sample at least as many observable variables as there are parameters. However, the previous best upper bound on this value was \emph{exponential} in the number of parameters. 

We show that identifiability holds in the least possible number of observables, in the case of (identical) uniform priors on the $U_j$'s; and in twice the least possible in the case of general priors. This resolves the previous exponential gap that existed for this problem.

\paragraph{Algebraic mappings.} 
Note that the model delineated above corresponds to a directed graphical model, or Bayesian network, 
 with all edges directed from latent toward observable vertices. The independence of the $U_j$'s in the prior is implicit in that these vertices are sources in this graph: see Fig.~\ref{fig:dir}.
 The representation~\eqref{prod:struct}
 specifies an algebraic mapping
from model parameters (namely, the $\alpha$'s and the prior distributions on the $U_j$)
to a probability distribution on $\cX$. 
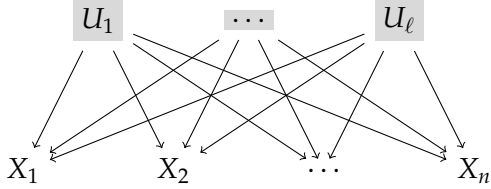
\begin{figure} 

\begin{tikzpicture}
  \node (U1) at (0,0) {$\mathcolorbox{U_1}$};
  \node (U2) at (2, 0) {$\mathcolorbox{\ldots}$};
  \node (U3) at (4, 0) {$\mathcolorbox{U_\ell}$};
  
  \node (X1) at (-1, -2) {$X_1$};
  \node (X2) at (1, -2) {$X_2$};
  \node (X3) at (3, -2) {$\ldots$};
  \node (X4) at (5, -2) {$X_n$};
  
  \foreach \i in {1,2,3} {
    \foreach \j in {1,2,3,4} {
      \draw[->] (U\i) -- (X\j);
    }
  }\end{tikzpicture}\caption{Directed graphical model (latent variables shaded)} \label{fig:dir} \end{figure}

In Section~\ref{sec:un} the prior on $U$ is assumed uniform, so the mapping we are concerned with is from the $\alpha$'s to the distribution of $\cX$. 
Section~\ref{sec:gen} treats the general case of arbitrary priors on the $U_j$.

If the dimension of the image of an algebraic mapping matches the dimension of the parameter space (minus any continuous latent symmetries), it follows that the image of any regular point has only finitely many pre-images. In this case we say the model is locally identifiable. If, further, at such points the pre-image consists only of points which are in a common orbit of the latent symmetry group, then we say that the model is fully identifiable. The question of local identifiability along with the related question of whether the model generates all possible distributions on $\cX$, are the subjects of much of the extensive literature about this model and its undirected-graph variant (see below) in the neural networks and algebraic statistics literatures~\cite{hintonOT06,drtonSS07,LeRouxB08,CuetoMS10,longS10,Hinton10,MartensCPZ13,MontufarM15,MontufarM17,Montufar18,SeigalM18}. We encountered this question in the context of causal identification and the classical moment problem~\cite{gordon2020sparse,gordon2021source,gmrs23}.

The only prior upper bound for the for the number of observables required for model identification, followed by treating $\cU$ as a single ``lumped'' rv, taking on $2^\ell$ possible values. Then there is a longstanding result~\cite{Blischke64} implying an upper bound on $n$ of $2^{\ell+1}-1$. This is exponentially larger than the number of parameters of the model.

We close this exponential gap, both for uniform and for general priors. We introduce a general method for showing local identifiability of these latent symmetric models. Our main results are: \newline \newline
\emph{Theorem~\ref{thm:independentinfluence}: In the uniform-prior case, the model can be identified with $n=\ell$ observables, which is best possible, as it matches the number of degrees of freedom of the model (after quotienting-out symmetries). \newline \newline
Theorem~\ref{thm:ind-inf-genl}: 
 In the general-prior case, the model can be identified with $n\leq 4\ell+1$ observables, which is within a factor of two of best possible.}  

These theorems are logically incomparable. Their proofs share a basic structure, but the second involves some surprising ingredients that are not foreshadowed in the first.

This result opens up the possibility of a fast algorithm for the identification of PoEs belonging to the given class, since the running time of any such algorithm is bounded from below by the number of moments required for identification. We envision a research program culminating in a fast algorithm for robust identification of product of experts models.

\paragraph{Related work: undirected graphical models.}
The \emph{conditional distributions} on $X$ that occur in our model~\eqref{prod:struct} agree with those of 
 Restricted Boltzmann Machines (RBM)~\cite{ackleyHS85}
(a kind of Markov random field model), and particularly the ``harmonium''~\cite{smolensky86,freundH91} special case. An RBM is an undirected graphical model comprising one layer of latent random variables $\cU$, one layer of observable random variables $\cX$, and satisfying that the $X_i$ are independent conditional on $\cU$.
An RBM is often written in the following form:
\be \Pr((\cX,\cU)=(x,u)) = \frac{1}{Z} \exp(-x W u^\dagger - x b^\dagger - c u^\dagger) \label{RBM1} \ee where
$Z=\sum_{x,u} \exp(-x W u^\dagger - x b^\dagger - c u^\dagger)$. The dependence of $X$ on $U$ can be expressed:
\be \Pr(x|u)=\exp(-x W u^\dagger - x b^\dagger - c u^\dagger-d) \label{RBM2} \ee
where $d=-\sum_{i,j} \log (1-\alpha_{i,j,0}) ; \; 
b_i =-\sum_{j}\log \frac{\alpha_{i,j,0}}{1-\alpha_{i,j,0}} ; \;
c_j=-\sum_i \log \frac{1-\alpha_{i,j,1}}{1-\alpha_{i,j,0}} ; \;
W_{ij}=-\log\frac{\alpha_{i,j,1}(1-\alpha_{i,j,0})}{\alpha_{i,j,0}(1-\alpha_{i,j,1})}.$ This is referred to as an undirected graphical model because each coefficient $W_{ij}$ is regarded as an energy associated with the unordered pair of sites $\{i,j\}$ ($i$ observable, $j$ latent). The conditional distributions $\Pr(\cX|\cU)$ in~\eqref{RBM2} have 
 the same form as in our model~\eqref{prod:struct} (or its more general version in which the $X_i$ are only conditionally independent)---but~\eqref{RBM2} does not allow imposition of 
 a chosen product distribution on $\cU$ as the prior, and in fact, generally the $U_j$ will not be independent in the distribution~\eqref{RBM1}. 

The conditional distributions of RBMs enable expressive statistical models with relatively few parameters. For this reason and because of the connection to layered networks, RBMs have been extensively studied in the neural networks and algebraic statistics literatures (citations above). An interesting recent (and independent) contribution in this literature concerns identifiability~\cite{OV23}, but there are no obvious implications in either direction between the works.
(The main thing to note is that it relies for identifiability on the Kruskal condition; that condition does gives an upper bound 
on the number of observables required but if one works out the bound, one sees
that it cannot be less than
 $2^{\ell+1}-1$. But the slightly better bound of $2^\ell$ was the exponentially-large bound that we set out to rectify. Second, one should note that that work addresses a somewhat more general class of models, but then relies upon a general-position assumption about the input, an assumption we do not make, and that is also not valid in our setting of conditionally-iid $X_i$.)

\section{Identifying models with uniform priors} \label{sec:un} \subsection{Preliminaries}
\paragraph{The model.}

In this section we treat the case that the prior on each $U_j$ is uniform. 
Then the model has $2\ell$ parameters $\alpha_{jb}$ for $j=1,\dotsc,\ell$ and $b = 0,1$;
and~\eqref{prod:struct} yields that
\begin{align} \begin{split}\Pr(X=1)&=\mun_1 \quad \quad \text{ where } \\
 \mun_1 & \coloneqq \prod_{j=1}^\ell \frac{\alpha_{j0} + \alpha_{j1}}{2} \end{split} \label{mun1} \end{align}
and more generally
\begin{align} \begin{split}
\Pr(X_1=\ldots=X_t=1) &= \mun_t \quad \quad \text{ where } \\
 \mun_t & \coloneqq \prod_{j=1}^\ell \frac{\alpha_{j0}^t + \alpha_{j1}^t}{2}.
\end{split} \label{munj} \end{align}

\paragraph{Hadamard products of Hankel matrices.} It was observed in~\cite{CuetoMS10} that the RBM model under consideration there, was a Hadamard product of simpler models, each having only a single latent variable. A similar phenomenon occurs in the directed model we study.

For each $j \in [\ell]$, define $\Han(n,j)$ to be the $(n+1)\times(n+1)$ matrix
with entries \[ \Han(n,j)_{ab}=(\alpha_{j0}^{a+b} + \alpha_{j1}^{a+b})/2 \quad \quad (0\leq a,b \leq n). \] This is a Hankel matrix of rank $2$. The Hadamard (or entrywise) product of these matrices, ranging over $1\leq j \leq \ell$, is
\[ H(n)_{ab}= \mun_{a+b} \quad \quad (0\leq a,b \leq n). \]
If we had access to each individual Hankel matrix $\Han(2,j)$, we could apply the classical method of Prony or similar methods~\cite{Prony1795,gordon2020sparse,KKMMR18} to determine $\alpha_{j0}$ and 
$\alpha_{j1}$; this method works (for an arbitrary prior on $U_j$)
provided the Hankel matrix is rank-deficient.

However, what we have access to is only $H(n)$ and not the individual $\Han(n,j)$'s. One may consider $H(n)$ itself as a Hankel matrix, by ignoring the product structure on the $U_j$'s and regarding $U$ as a single latent variable with range $[2^\ell]$; but then in order to have rank deficiency, one requires the very large Hadamard matrix $H(2^\ell)$, and consequently, the Prony method is only applicable to identifying the model if we obtain $n=2^\ell$ observables $X_i$. This is exponentially larger than the number of degrees of freedom of the model, which as we see from the model \eqref{munj} is merely linear in $\ell$. As noted earlier, this exponential gap was the motivation for our investigation.
We show that the linear answer is correct: the model is locally identifiable as soon as $n$ matches the number of degrees of freedom of the model (after quotienting out a continuous symmetry which we shortly describe).

\paragraph{Symmetries of the model.} The model has both discrete and continuous symmetries. The moments $\mun_t$ are invariant to:
\begin{enumerate}
\item \emph{Discrete symmetries.} The wreath product $S_2 \wr S_\ell$ (a.k.a.\ hyperoctahedral group):
\begin{enumerate}[label=\textnormal{(\alph*)}] 
\item For any $j$, exchange $\alpha_{j0}$ and $\alpha_{j1}$.
\item Exchange any $j$ and $j'$. That is, for $j\neq j'$, exchange
$\alpha_{j0}$ with $\alpha_{j'0}$, and $\alpha_{j1}$ with $\alpha_{j'1}$. 
\end{enumerate}
\item
\emph{Continuous symmetries.} For any $j\neq j'$ and  $\lambda > 0$, the gauge transformation
\begin{align} \begin{cases}
    (\alpha_{j0},\alpha_{j1}) &\mapsto (\lambda\alpha_{j0},\lambda\alpha_{j1}),  \\
    (\alpha_{j'0},\alpha_{j'1}) &\mapsto (\lambda^{-1}\alpha_{j'0},\lambda^{-1}\alpha_{j'1})
    \end{cases} \label{gauge}
\end{align} 
\end{enumerate}

The model can of course be identified only up to these symmetries. We can therefore, w.l.o.g., take advantage of the gauge symmetries to scale the parameters $\alpha_{jb}$ so that for all $j\in [\ell]$,
\be \alpha_{j0}+\alpha_{j1}=2\mun_1^{1/\ell} =: \gamma \label{gamma-known} \ee
 We see now that the model has only $\ell$ genuine continuous degrees of freedom. 

If $\mun_1=0$ the model is trivial, so in the sequel we assume $\gamma>0$.

To solve for $\alpha_{j0}$ and $\alpha_{j1}$, up to the symmetry between them, it suffices, in view of~\eqref{gamma-known}, to solve for \[ {\an}_j \coloneqq \alpha_{j0}\alpha_{j1}. \]

This transformation results in a family of polynomials 
$q_t$ in ${\an}_1,\ldots,{\an}_\ell$ such that $\mun_t = q_t({\an}_1,\ldots,{\an}_\ell)$; fixing any $m$, the mapping $(q_1,\ldots,q_m$) of 
$\{{\an}_1,\ldots,{\an}_\ell \}$ to $(\mun_1,\ldots,\mun_m)$ 
carries $\rset^\ell$ into a variety of dimension at most $\ell$. If the dimension is $\ell$, then for all but a set of measure $0$, specifically at all regular points of the mapping, the image has finitely many pre-images.

Our main result in this section is:
\begin{theorem} The mapping $\{\an_1,\ldots,\an_\ell\} \mapsto (\mun_1,\dotsc,\mun_{\ell})$ is a.e.\ locally identifiable. \label{thm:independentinfluence}
\end{theorem}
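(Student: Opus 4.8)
The plan is to prove the equivalent statement that the differential of the polynomial map $\Phi\colon(\an_1,\dots,\an_\ell)\mapsto(\mun_1,\dots,\mun_\ell)$ has rank $\ell$ at a generic point; its Jacobian determinant is then a polynomial in $(\an_1,\dots,\an_\ell)$ that is not identically zero, hence nonzero off a set of measure zero, which is exactly a.e.\ local identifiability. One caveat that shapes everything: $\mun_1=(\gamma/2)^\ell$ is held fixed by the scale normalization~\eqref{gamma-known}, so the information really lies in $\mun_2,\dots,\mun_{\ell+1}$, and I take these $\ell$ moments as the coordinates of $\Phi$.

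First I would exploit the Hadamard/product structure. Put $p^{(j)}_t\coloneqq(\alpha_{j0}^t+\alpha_{j1}^t)/2$, so that $p^{(j)}_t$ depends on $\an_j$ alone through the three-term recurrence $p^{(j)}_t=\gamma\,p^{(j)}_{t-1}-\an_j\,p^{(j)}_{t-2}$ (with $p^{(j)}_0=1$, $p^{(j)}_1=\gamma/2$), while $\mun_t=\prod_{j=1}^\ell p^{(j)}_t$. Hence $\partial\mun_t/\partial\an_k=\mun_t\cdot\big(\partial_{\an_k}p^{(k)}_t\big)/p^{(k)}_t$ wherever the $\mun_t$ are nonzero, so after scaling the $t$-th row of the Jacobian by the generically nonzero factor $1/\mun_t$ it suffices to show the $\ell\times\ell$ matrix with entries $\big(\partial_{\an_k}p^{(k)}_t\big)/p^{(k)}_t$ is nonsingular for generic $\an$.

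Next I would compute the entries and strip off the trivial factors. Differentiating $\alpha_{k0}^2-\gamma\alpha_{k0}+\an_k=0$ gives $\partial_{\an_k}\alpha_{k0}=-\partial_{\an_k}\alpha_{k1}=1/(\alpha_{k1}-\alpha_{k0})$, so $\partial_{\an_k}p^{(k)}_t=-\tfrac t2\,(\alpha_{k0}^{t-1}-\alpha_{k1}^{t-1})/(\alpha_{k0}-\alpha_{k1})$. Writing $\rho_k\coloneqq\alpha_{k1}/\alpha_{k0}$ for the ratio of the two roots, the $(t,k)$ entry equals $(-t)\cdot\big[\alpha_{k0}^2(1-\rho_k)\big]^{-1}\cdot g_t(\rho_k)$, where
\[g_t(\rho)\coloneqq\frac{1-\rho^{\,t-1}}{1+\rho^{\,t}}.\]
The scalars $-t$ (per row) and $\big[\alpha_{k0}^2(1-\rho_k)\big]^{-1}$ (per column) are generically nonzero, so the matrix is nonsingular for generic $\an$ as soon as $\big(g_t(\rho_k)\big)_{t,k}$ is nonsingular for some choice of $\rho_1,\dots,\rho_\ell$; pulling that choice back through the non-constant analytic map $\an_k\mapsto\rho_k$ then yields $\det(d\Phi)\not\equiv0$.

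The last step, and the one carrying the real content, is to show the functions $g_2,\dots,g_{\ell+1}$ are linearly independent — linearly independent analytic functions always admit sample points making the evaluation matrix nonsingular. This is where the recurrence pays off: in the power-series expansion of $g_t$ about $\rho=0$ the lowest positive term is $-\rho^{\,t-1}$, and more generally the coefficient of $\rho^m$ in $g_t$ vanishes whenever $m<t-1$; thus in any relation $\sum_t c_t g_t\equiv0$, reading off the coefficients of $\rho^1,\rho^2,\dots,\rho^\ell$ in turn forces $c_2=c_3=\dots=c_{\ell+1}=0$ by induction. I expect this non-degeneracy step to be the main obstacle: the reductions above are essentially bookkeeping, whereas establishing that the resulting structured determinant does not vanish identically is the crux — the paper's abstract signals that this can also be seen via root-interlacing for the three-term recurrence $p^{(j)}_t=\gamma p^{(j)}_{t-1}-\an_j p^{(j)}_{t-2}$ — and, per the authors' remark, it is precisely the part that demands genuinely new ingredients in the general-prior case (Theorem~\ref{thm:ind-inf-genl}).
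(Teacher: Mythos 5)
Your proof is correct, and it takes a genuinely different route from the paper's. Both arguments share the same frame --- including your shift of the moment window to $(\mun_2,\dotsc,\mun_{\ell+1})$, which is also what the paper's proof tacitly does, since $\pn_1=\gamma/2$ is a root-free constant and the interlacing data of Prop.~\ref{prop:characterizing univariate polynomials} only begins at $m=2$ --- and both reduce the theorem to exhibiting one point where the Jacobian determinant is nonzero. The difference is how that point is produced. The paper argues structurally: the smallest roots $\beta_{m,1}$ of the $\pn_m$ strictly decrease in $m$ (interlacing), so the root-multiplicity matrix $M(\pn_i,\eta_j)$ at $\eta_m=\beta_{m,1}$ is unitriangular, and the elimination device of Prop.~\ref{prop:identifiability of symmetric polynomials} turns the Jacobian at $(\eta_1,\dotsc,\eta_\ell)$ into a diagonal matrix with nonzero entries. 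You instead compute the Jacobian entries in closed form through the root ratio $\rho_k=\alpha_{k1}/\alpha_{k0}$, strip off the nonvanishing row and column scalars, and reduce everything to the linear independence of $g_t(\rho)=(1-\rho^{t-1})/(1+\rho^t)$, which follows from the expansion $g_t=1-\rho^{t-1}-\rho^{t}+O(\rho^{2t-1})$ at $\rho=0$; your computations ($\partial_{\an}\alpha_0=1/(\alpha_1-\alpha_0)$, the factorization of each entry, the coefficient induction killing $c_2,\dotsc,c_{\ell+1}$ in turn) all check out. What each buys: your argument is more elementary and self-contained --- no interlacing is needed, and the non-degeneracy crux collapses to a one-line valuation of Taylor coefficients --- while the paper's interlacing-plus-multiplicity machinery is exactly what carries over (in modified form) to the general-prior Theorem~\ref{thm:ind-inf-genl}, where a single ratio variable no longer suffices; the paper also produces an explicit real evaluation point, whereas yours comes from the standard but existential fact that linearly independent functions admit a nonsingular evaluation matrix. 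If you write this up, the one detail to make explicit is that the sample points $\rho_k$ can be taken in a real neighborhood of $0$ (linear independence already holds there), so that they are realized by real parameters $\an_k=\gamma^2\rho_k/(1+\rho_k)^2$ at which every factor $(\alpha_{k0}^t+\alpha_{k1}^t)/2=\alpha_{k0}^t(1+\rho_k^t)/2$ is nonzero.
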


\subsection{The polynomial sequence} \label{sec:un-p-s}
We require a more detailed understanding of the probabilities $\mu_t$. Observe that 
\[\alpha_{j0}^2 + \alpha_{j1}^2 = (\alpha_{j0}+\alpha_{j1})^2 - 2\alpha_{j0}\alpha_{j1} = \gamma^2 - 2\an_j \] so that 
\[ \mun_2 = \prod_{j=1}^{\ell} \frac{\gamma^2-2\an_j}{2}. \]

Similarly, $
    \alpha_{j0}^3 + \alpha_{j1}^3 = (\alpha_{j0} +\alpha_{j1})^3 - 3(\alpha_{j0}^2\alpha_{j1}+\alpha_{j0}\alpha_{j1}^2)
    = \gamma^3 - 3\gamma \an_j$, so that $\mun_3$ has an analogous expression. This continues. In the remainder of this Section the subscript `$j$' is suppressed.

\begin{lemma}[Three-term recurrence] Letting $\an = \alpha_0\alpha_1$,
$(\alpha_0^m + \alpha_1^m)/2$ 
can be written as a polynomial $\pn_m(\an)$ 
satisfying the three-term recurrence \label{prop-recur}
\begin{align}
\pn_0 (\an) &= 1, \nonumber \\
\pn_1(\an) &= \gamma/2, \nonumber \\
\pn_m(\an) &= \gamma \pn_{m-1} - \an \pn_{m-2},\quad m\geq 2. \label{recur3}
\end{align} 
\end{lemma}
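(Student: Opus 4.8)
The statement is the classical power-sum recurrence for the roots of a monic quadratic, so the plan is to make that identification explicit. Write $s_m := \alpha_0^m + \alpha_1^m$, so that the claim is that $\pn_m = s_m/2$ has the stated form. Since $\gamma = \alpha_0+\alpha_1$ (by the normalization \eqref{gamma-known}) and $\an = \alpha_0\alpha_1$, the two numbers $\alpha_0,\alpha_1$ are exactly the roots of $x^2-\gamma x+\an$. Hence $\alpha_i^2 = \gamma\alpha_i - \an$ for $i\in\{0,1\}$; multiplying by $\alpha_i^{m-2}$ gives $\alpha_i^m = \gamma\alpha_i^{m-1} - \an\,\alpha_i^{m-2}$ for every $m\ge 2$, and summing over $i\in\{0,1\}$ yields $s_m = \gamma s_{m-1} - \an\, s_{m-2}$.

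Dividing by $2$ gives $\pn_m = \gamma\pn_{m-1} - \an\,\pn_{m-2}$ for $m\ge 2$, which is \eqref{recur3}; and the base cases follow from $s_0 = 2$ and $s_1 = \gamma$, i.e.\ $\pn_0 = 1$ and $\pn_1 = \gamma/2$. It then remains only to verify that $\pn_m$ is genuinely a polynomial in the single variable $\an$ (with $\gamma$ treated as the fixed constant of \eqref{gamma-known}), rather than an expression still involving $\alpha_0,\alpha_1$ separately. This is immediate by induction on $m$: $\pn_0$ and $\pn_1$ are constants, hence lie in $\rset[\an]$, and if $\pn_{m-1},\pn_{m-2}\in\rset[\an]$ then so does $\gamma\pn_{m-1} - \an\,\pn_{m-2}$. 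A secondary one-line induction on the same recurrence also records, for later use, that $\deg_{\an}\pn_m = \Floor{m/2}$ with leading coefficient $(-1)^{\Floor{m/2}}$.

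There is no real obstacle here: the entire content is the observation that $\alpha_0,\alpha_1$ satisfy their common characteristic quadratic, after which everything is a short induction. The only point that deserves an explicit word is the last one—that the right-hand side of the recurrence closes up as a polynomial in $\an$ alone—and that is forced by the recurrence itself together with the two polynomial initial terms.
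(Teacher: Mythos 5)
Your proof is correct and is essentially the paper's: the paper's one-line proof simply cites the Newton identity relating power sums to elementary symmetric functions in two variables, and your characteristic-quadratic computation ($\alpha_i^2=\gamma\alpha_i-\an$, multiply by $\alpha_i^{m-2}$, sum) is exactly the standard derivation of that identity, with the polynomiality-in-$\an$ point made explicit. One small caveat on your parenthetical side remark: the leading coefficient of $\pn_m$ is not $(-1)^{\Floor{m/2}}$ itself (e.g.\ $\pn_3=\gamma^3/2-3\gamma\an/2$); it only has that sign, which is what the paper's Observation~\ref{obsp} actually asserts.
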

\begin{proof} This is the Newton identity relating power and elementary symmetric functions, specialized to the two-variable case; $\gamma$ is the first, and $\an$ is the second, elementary symmetric function of $\alpha_0$ and $\alpha_1$.
\end{proof}

The recurrence~\eqref{recur3} resembles the 
familiar recurrence of orthogonal polynomials, but differs significantly in that the variable (here $\an$) multiplies $\pn_{m-2}$ rather than $\pn_{m-1}$. In particular the polynomials $\pn_m$ are not of incrementing degree in $\an$.

(We note however that at the particular value $\an=1/4$, these polynomials 
as functions of $\gamma$ are rescalings of the Chebychev polynomials of the first kind.) 
 
\begin{obs} The polynomials $\pn_m$ defined above satisfy the following: 
\begin{enumerate}
\item The degree of $\pn_m(\an)$ is $\Floor{m/2}$.
\item $\pn_m(0) = \gamma^m/2$ for $m\geq 1$.
\item The leading term of $\pn_m(\an)$ has a negative sign if $\Floor{m/2}$ is odd, and a positive sign if $\Floor{m/2}$ is even.
\end{enumerate} \label{obsp}
\end{obs}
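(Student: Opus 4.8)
The plan is to establish all three parts by induction on $m$, using only the recurrence~\eqref{recur3} and the standing assumption $\gamma>0$. Part~2 is immediate: setting $\an=0$ in~\eqref{recur3} annihilates the $\an\pn_{m-2}$ term, so $\pn_m(0)=\gamma\,\pn_{m-1}(0)$ for $m\ge 2$; since $\pn_1(0)=\gamma/2$, induction gives $\pn_m(0)=\gamma^m/2$ for all $m\ge 1$.

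For parts~1 and~3 I would run a single induction on the strengthened statement: for every $m\ge 1$, $\pn_m$ has degree exactly $\Floor{m/2}$, and its leading coefficient $c_m$ is nonzero with $\sign(c_m)=(-1)^{\Floor{m/2}}$. The base cases $m=0,1,2$ follow directly from $\pn_0=1$, $\pn_1=\gamma/2$, $\pn_2=\gamma^2/2-\an$. For the inductive step one observes that on the right side of~\eqref{recur3} the term $\gamma\pn_{m-1}$ has degree $\Floor{(m-1)/2}$ while the term $\an\pn_{m-2}$ has degree $1+\Floor{(m-2)/2}$, and these compare differently according to the parity of $m$; so I would split into two cases.

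If $m$ is even then $\Floor{(m-1)/2}=\Floor{(m-2)/2}=m/2-1$, hence $\gamma\pn_{m-1}$ has degree $m/2-1$ and $\an\pn_{m-2}$ has degree $m/2$; the top-degree term of $\pn_m$ therefore comes entirely from $-\an\pn_{m-2}$, so $\deg\pn_m=m/2$ and $c_m=-c_{m-2}$, and since $\Floor{m/2}$ and $\Floor{(m-2)/2}$ have opposite parity the sign flips in exactly the right way. If $m$ is odd then $\Floor{(m-1)/2}=(m-1)/2=\Floor{m/2}$ and $1+\Floor{(m-2)/2}=(m-1)/2$ as well, so \emph{both} summands reach degree $(m-1)/2$ and the coefficient of $\an^{(m-1)/2}$ in $\pn_m$ equals $\gamma c_{m-1}-c_{m-2}$. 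This is the delicate point and the main obstacle to watch: a priori this coefficient could vanish, which would lower the degree. It does not, because by the inductive hypothesis $\Floor{(m-1)/2}$ and $\Floor{(m-2)/2}$ differ by one, so $c_{m-1}$ and $c_{m-2}$ have opposite signs; combined with $\gamma>0$, the contributions $\gamma c_{m-1}$ and $-c_{m-2}$ share the sign of $c_{m-1}$, so they reinforce rather than cancel, giving $c_m=\gamma c_{m-1}-c_{m-2}\neq 0$ with $\sign(c_m)=\sign(c_{m-1})=(-1)^{\Floor{(m-1)/2}}=(-1)^{\Floor{m/2}}$. This closes the induction and yields parts~1 and~3.

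(One could alternatively shortcut the whole computation by invoking the classical Waring-type expansion $\alpha_0^m+\alpha_1^m=\sum_{k=0}^{\Floor{m/2}}(-1)^k\frac{m}{m-k}\binom{m-k}{k}\gamma^{m-2k}\an^k$ and reading off $\deg\pn_m$, the value $\pn_m(0)$, and the leading coefficients $c_{2k}=(-1)^k$, $c_{2k+1}=(-1)^k(2k+1)\gamma/2$ directly; but the inductive argument above is self-contained and, as a bonus, isolates the structural fact of \emph{which} of the two terms in~\eqref{recur3} is dominant.)
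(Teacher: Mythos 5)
Your proof is correct and follows the same route as the paper, which simply states ``By induction over $m$'' and leaves the details to the reader; you have filled them in accurately, including the one genuinely delicate point (that for odd $m$ the two contributions $\gamma c_{m-1}$ and $-c_{m-2}$ to the top coefficient have the same sign by the inductive hypothesis and $\gamma>0$, so no cancellation can drop the degree). The closed-form check via the Waring expansion is also right and a nice sanity check, but unnecessary.
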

\begin{proof} By induction over $m$. 
\end{proof}

\begin{prop}[Interlacing] $\pn_0$ and $\pn_1$ are positive constants. For any $m \geq 2$:
\begin{enumerate} 
\item The roots of $\pn_m$ are simple and contained in the interval $(0,\infty)$; denote them
$\beta_{m,1} < \dotsc< \beta_{m,{\Floor{m/2}}}$. 
\item $0 < \beta_{m,1} < \beta_{m-1,1}$ and $\beta_{m-1,i-1} < \beta_{m,i} < \beta_{m-1,i}$ for $i=2,\dotsc,\Floor{m/2}$. If $\pn_m$ has degree greater than $\pn_{m-1}$ then $\beta_{m-1,{\Floor{(m-1)/2}}} < \beta_{m,{\Floor{m/2}}}$.

(For $m=2$ this requires the convention $\beta_{1,0}=0,\beta_{1,1}=\infty$.)
\end{enumerate} \label{prop:characterizing univariate polynomials}
\end{prop}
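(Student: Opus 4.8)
The plan is to prove both claims by induction on $m$, using the three-term recurrence $\pn_m = \gamma\pn_{m-1} - \an\pn_{m-2}$ together with the sign data from Observation~\ref{obsp}. The standard tool for root interlacing in orthogonal-polynomial-style recurrences is a sign-change argument: if one knows the signs of $\pn_{m-1}$ at its own roots (or at the roots of $\pn_m$), the recurrence pins down the sign of the remaining term, forcing a root of the other polynomial into each gap. Here the wrinkle is that $\an$ multiplies $\pn_{m-2}$, not $\pn_{m-1}$, and the degrees do not increment by one each step: $\deg \pn_m = \Floor{m/2}$, so the degree is constant for a pair of consecutive indices and then jumps. I will carry the induction hypothesis as a bundled statement: (i) $\pn_m$ has $\Floor{m/2}$ simple positive roots; (ii) the interlacing inequalities with $\pn_{m-1}$ hold as stated; and, crucially, (iii) an auxiliary sign statement describing $\sign \pn_{m-1}(\beta_{m,i})$ and $\sign\pn_m(\beta_{m-1,i})$, i.e., that consecutive polynomials strictly alternate in sign at each other's roots, with the signs anchored by $\pn_m(0)=\gamma^m/2>0$ and by the leading-coefficient sign from Observation~\ref{obsp}(3).

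\textbf{Main steps.} First, handle the base cases $m=2,3$ by direct computation: $\pn_2(\an) = \gamma^2/2 - \an$ has the single root $\gamma^2/2$, and $\pn_3(\an) = \gamma^3/2 - \tfrac{3}{2}\gamma\an$ has the single root $\gamma^2/3 < \gamma^2/2$, which gives $\beta_{3,1}<\beta_{2,1}$; together with the convention $\beta_{1,0}=0$, $\beta_{1,1}=\infty$ this also verifies the $m=2$ interlacing against $\pn_1$. Second, for the inductive step from $m-1$ to $m$: evaluate the recurrence $\pn_m(\beta_{m-1,i}) = -\an\cdot\pn_{m-2}(\beta_{m-1,i})$ at each root $\beta_{m-1,i}$ of $\pn_{m-1}$. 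Since $\beta_{m-1,i}>0$, the sign of $\pn_m$ at $\beta_{m-1,i}$ is the opposite of the sign of $\pn_{m-2}$ there. But interlacing of $\pn_{m-1}$ with $\pn_{m-2}$ (from the hypothesis at level $m-1$) tells us the $\beta_{m-1,i}$ are separated by the roots of $\pn_{m-2}$, so $\pn_{m-2}$ alternates in sign across them; hence $\pn_m$ alternates in sign across the roots of $\pn_{m-1}$. Counting sign changes of $\pn_m$ on $(0,\infty)$ — between consecutive $\beta_{m-1,i}$, between $0$ and $\beta_{m-1,1}$ (using $\pn_m(0)=\gamma^m/2>0$), and, when the degree increases, beyond $\beta_{m-1,\Floor{(m-1)/2}}$ toward $\pm\infty$ (using the leading-sign rule) — forces exactly $\Floor{m/2}$ sign changes, and since $\pn_m$ has degree exactly $\Floor{m/2}$, each sign change is a simple root and these are all of them. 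This simultaneously establishes (i) and the interlacing inequalities (ii); along the way one reads off the auxiliary sign statement (iii) needed to feed the next step.

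\textbf{The delicate point} is bookkeeping the two parities of $m$ separately, since whether $\deg\pn_m > \deg\pn_{m-1}$ depends on $m \bmod 2$, and this changes both the number of gaps available for roots and the relevant behavior at infinity. When the degree is unchanged ($m$ odd, say $m=2k+1$, so $\deg\pn_m=\deg\pn_{m-1}=k$), the $k$ roots of $\pn_m$ must all land in the $k$ "bounded" gaps — below $\beta_{m-1,1}$ and between consecutive roots of $\pn_{m-1}$ — which is exactly the claimed $0<\beta_{m,1}<\beta_{m-1,1}$ and $\beta_{m-1,i-1}<\beta_{m,i}<\beta_{m-1,i}$; one must check that no root "escapes to infinity", which follows because $\pn_m$ and $\pn_{m-1}$ have the same leading sign in this case (Observation~\ref{obsp}(3): $\Floor{m/2}=\Floor{(m-1)/2}$), so there is no sign change past the largest root of $\pn_{m-1}$. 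When the degree increases ($m$ even, $m=2k$, $\deg\pn_m = k > k-1 = \deg\pn_{m-1}$), one extra root appears; the sign data shows it must lie to the right of $\beta_{m-1,\Floor{(m-1)/2}}$, giving the final inequality in the statement. I expect this parity case-split — keeping the leading-sign anchor, the value at $0$, and the interlacing-of-the-previous-pair all consistent — to be the main obstacle, but it is entirely mechanical once the inductive bundle (i)–(iii) is set up correctly. The positivity of all roots (as opposed to merely reality) comes for free from $\pn_m(0)=\gamma^m/2 \neq 0$ plus the fact that all sign changes were located in $(0,\infty)$.
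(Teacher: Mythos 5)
Your proposal is correct and follows essentially the same route as the paper: induction on $m$, with the three-term recurrence forcing sign alternation of $\pn_m$ at the roots of the preceding polynomials, anchored by $\pn_m(0)=\gamma^m/2>0$ and the leading-coefficient sign, together with the parity split according to whether the degree jumps. The only (harmless) difference is that you evaluate $\pn_m$ solely at the roots of $\pn_{m-1}$ and deduce the alternation from the previously established interlacing of $\pn_{m-1}$ with $\pn_{m-2}$, whereas the paper also evaluates $\pn_m$ at the roots of $\pn_{m-2}$ so as to pin each root of $\pn_m$ into the tighter interval $(\beta_{m-2,i},\beta_{m-1,i+1})$; both versions yield the claimed interlacing.
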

\begin{proof}
We induct on $m$. The Proposition holds for $m=0,1$. Now $\pn_2 = -\an+\gamma^2/2$ which has a root at $\gamma^2/2$ and $\pn_3 = \gamma(-\an+\gamma^2/2)-\an\gamma/2 
= -3\gamma \an/2 + \gamma^3/2$ which has a root at $\gamma^2/3$.

Now fix any $m\geq 2$. Let $d = \deg(\pn_{m-2})$, so $d+1=\deg(\pn_m)$. 
By the inductive hypothesis, $0<\beta_{m-1,1}<\beta_{m-2,1}<\beta_{m-1,2}<\dotsm<\beta_{m-1,d}<\beta_{m-2,d}$. 
If $\deg(\pn_{m-1}) = d+1 > \deg(\pn_m)$, then there is an additional root $\beta_{m-1,d+1}$ of $\pn_{m-1}$ with $\beta_{m-2,d} < \beta_{m-1,d+1}$.
Since we've accounted for every root of $\pn_{m-1}$ and $\pn_{m-2}$,
 the value of $\pn_{m-1}$ must alternate between strictly positive and strictly negative on the sequence of open intervals $(-\infty,\beta_{m-1,1})$, $(\beta_{m-1,1},\beta_{m-1,2})$, $(\beta_{m-1,2},\beta_{m-1,3}),\dotsc,$ $(\beta_{m-1,{\Floor{(m-1)/2}}},\infty)$, 
and $\pn_{m-2}$ alternates in sign on the intervals $(-\infty,\beta_{m-2,1})$, $(\beta_{m-2,1},\beta_{m-2,2}),\dotsc,$ $(\beta_{m-2,d},\infty)$.
Now we compute 
\begin{align*}
\pn_m(\beta_{m-1,1}) &= \gamma \pn_{m-1}(\beta_{m-1,1}) - \beta_{m-1,1} \pn_{m-2}(\beta_{m-1,1}) &&= -\beta_{m-1,1}\pn_{m-2}(\beta_{m-1,1}) &< 0;\\ 
\pn_m(\beta_{m-2,1}) &= \gamma \pn_{m-1}(\beta_{m-2,1}) - \beta_{m-2,1} \pn_{m-2}(\beta_{m-2,1}) &&= \gamma \pn_{m-1}(\beta_{m-2,1}) &< 0.
\end{align*} By Observation~\ref{obsp}, $\pn_m(0) = \gamma^m/2 > 0$ so there must be a root of $\pn_m$ in $(0,\beta_{m-1,1})$.

For $1 < i < d$, $\beta_{m-1,i} \in (\beta_{m-2,i-1},\beta_{m-2,i})$ and $\beta_{m-2,i} \in (\beta_{m-1,i},\beta_{m-1,i+1})$.
Moreover, $\pn_m(\beta_{m-1,i}) = -\beta_{m-1,i} \pn_{m-2}(\beta_{m-1,i})$ and $\pn_m(\beta_{m-2,i}) = \gamma \pn_{m-1}(\beta_{m-2,i})$, so $\sign(\pn_m(\beta_{m-1,i})) = \sign(\pn_m(\beta_{m-2,i})) = - \sign(\pn_m(\beta_{m-1,i+1}))$. We conclude that there is a root of $\pn_m$ in the interval $(\beta_{m-2,i},\beta_{m-1,i+1})$ for $1 < i < d$. 

We've shown that there are roots of $\pn_m$ in each of the intervals 
$(0,\beta_{m-1,1})$, $(\beta_{m-2,1}$,$\beta_{m-1,2})$,
$(\beta_{m-2,2},\beta_{m-1,3}),\dotsc,$ $(\beta_{m-2,d-1}$, 
$\beta_{m-1,d})$. 
If $\deg(\pn_{m-1}) = d+1$, then by the same logic there is also a root in $(\beta_{m-2,d},\beta_{m-1,d+1})$ and the proof is complete. If $\deg(\pn_{m-1}) = d$, then the leading term of $\pn_m$ has a different sign than the leading terms of $\pn_{m-1}$ and $\pn_{m-2}$. Since $\sign(\pn_m(\beta_{m-2,d})) = \sign(\pn_{m-1}(\beta_{m-2,d}))$ and $\beta_{m-2,d}$ is greater than all the roots of $\pn_{m-1}$ it must be the case that $\sign(\pn_{m-1}(y)) = \sign(\pn_{m-1}(\beta_{m-2,d}))$ for all $y \in [\beta_{m-2,d},\infty)$. 
But $\lim_{y\to \infty} \pn_m(y) =-\lim_{y\to \infty}\pn_{m-1}(y)$, so there must be a root of $\pn_m$ in $(\beta_{m-2,d},\infty)$. We've thus accounted for all $d+1$ roots of $\pn_m$.
\end{proof} 

\subsection{Identifiability of latent symmetric models: the method} \label{uni-method}
We now describe the algebraic tool which enables the proof of Theorem~\ref{thm:independentinfluence}.
Consider a sequence of $\ell$ univariate polynomials $\pn_1,\dotsc,\pn_{\ell}$. (We will eventually substitute the $\pn$'s of the previous Section.) Let $y_1,\dotsc,y_{\ell}$ be indeterminates, and $y=(y_1,\dotsc,y_{\ell})$.
Construct symmetric polynomials in the $y_j$ by taking products as follows: 
\[ q_m(y) = \prod_{j=1}^{\ell} \pn_m(y_j). \]

\begin{prop} 
Suppose that for every $m\in [\ell]$, $\pn_m$ has a root $\eta_m$ 
that is simple and is not a root of $\pn_1,\dotsc,\pn_{m-1}$. Then the mapping \[ (y_1,\dotsc,y_{\ell}) \mapsto (q_1(y),\dotsc,q_{\ell}(y))\] is locally identifiable. \label{prop:identifiability of symmetric polynomials}
\end{prop}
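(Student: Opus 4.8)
The plan is to compute the Jacobian of the map $y\mapsto(q_1(y),\dotsc,q_\ell(y))$ and show that its determinant is not identically zero; by the dimension principle recalled in the introduction this yields local identifiability. (The $q_m$ are symmetric in the $y_j$, but that is only a finite $S_\ell$ symmetry, so the parameter dimension that must be matched is exactly $\ell$, i.e.\ we need the generic Jacobian rank to be $\ell$.)

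\textbf{Step 1: factor the Jacobian.} Since $q_m(y)=\prod_{j=1}^{\ell}\pn_m(y_j)$, we have $\partial q_m/\partial y_k=\pn_m'(y_k)\prod_{j\neq k}\pn_m(y_j)$, which is a polynomial in $y$; hence $\det J(y)$ is a polynomial. On the dense open set where every $\pn_m(y_j)$ is nonzero this entry equals $q_m(y)\,f_m(y_k)$, where $f_m\coloneqq \pn_m'/\pn_m=(\log\pn_m)'$. Pulling the factor $q_m(y)$ out of the $m$-th row gives, on that open set,
\[ \det J(y)=\Big(\prod_{m=1}^{\ell}q_m(y)\Big)\cdot\det\big[f_m(y_k)\big]_{m,k=1}^{\ell}. \]
Each $\pn_m$ is not identically zero (the hypothesis supplies it with a simple root $\eta_m$), so $\prod_m q_m$ is a nonzero polynomial. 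Thus it suffices to show the generalized Vandermonde determinant $\det[f_m(y_k)]_{m,k}$ does not vanish identically, and then $\det J$, being a polynomial that is nonzero on a dense open set, is $\not\equiv 0$.

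\textbf{Step 2: reduce to linear independence.} I will invoke the standard fact that for functions $f_1,\dotsc,f_\ell$ of one variable (here rational functions), $\det[f_m(y_k)]_{m,k}$ vanishes identically if and only if $f_1,\dotsc,f_\ell$ are linearly dependent over $\rset$. The nontrivial direction is an easy induction: given points $y_1,\dotsc,y_{\ell-1}$ at which the $(\ell-1)\times(\ell-1)$ minor is nonzero, Laplace-expanding the $\ell\times\ell$ determinant along its last column exhibits it, as a function of a free point $y_\ell$, as an $\rset$-linear combination of $f_1,\dotsc,f_\ell$ whose $f_\ell$-coefficient is that nonzero minor; linear independence then forces this combination to be nonzero somewhere. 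So it remains to prove $f_1,\dotsc,f_\ell$ are linearly independent.

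\textbf{Step 3: linear independence via the distinguished roots.} This is where the hypothesis is used, and I expect it to be the only real content of the proof. Each $f_m=\pn_m'/\pn_m$ is a rational function whose poles are exactly the roots of $\pn_m$, and at a simple root the residue is $1$. Suppose $\sum_{m=1}^{\ell}c_m f_m\equiv 0$ with not all $c_m$ zero, and let $m^{\ast}$ be the \emph{largest} index with $c_{m^{\ast}}\neq 0$. Because $\eta_{m^{\ast}}$ is a simple root of $\pn_{m^{\ast}}$, the function $f_{m^{\ast}}$ has a pole at $\eta_{m^{\ast}}$ with residue $1$; because $\eta_{m^{\ast}}$ is not a root of $\pn_1,\dotsc,\pn_{m^{\ast}-1}$, none of $f_1,\dotsc,f_{m^{\ast}-1}$ has a pole there; and $c_m=0$ for $m>m^{\ast}$. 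Hence $\sum_m c_m f_m$ has a pole at $\eta_{m^{\ast}}$ with residue $c_{m^{\ast}}\neq 0$, contradicting that it is identically zero. Therefore the $f_m$ are linearly independent, so $\det[f_m(y_k)]\not\equiv 0$, so $\det J\not\equiv 0$, and the map is a.e.\ locally identifiable.

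The main obstacle — and the heart of the matter — is Step 3: translating the combinatorial root condition into linear independence of the logarithmic derivatives. The ordering of the indices is essential here: it is precisely the requirement that $\eta_m$ avoid the roots of all \emph{earlier} $\pn_{m'}$ that makes the ``largest surviving index'' argument succeed. Steps 1 and 2 are routine once one notices the logarithmic-derivative factorization of the Jacobian.
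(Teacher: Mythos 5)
Your proof is correct, and it takes a genuinely different route from the paper's. The paper performs a multiplicative Gaussian elimination on the ``multiplicity matrix'' $M(\pn_i,\eta_j)$ to build auxiliary rational functions $r_i$ with $M(r_i,\eta_j)=\delta_{ij}$; the corresponding products $s_i=\prod_j r_i(y_j)$ are monomials in the $q_i$'s, and the Jacobian of $y\mapsto s$ evaluated \emph{at the specific point} $\eta=(\eta_1,\dotsc,\eta_\ell)$ is diagonal with nonzero entries, which forces the Jacobian of $Q$ to be nonsingular there. You instead factor the generic Jacobian as $\det J=\bigl(\prod_m q_m\bigr)\det[f_m(y_k)]$ with $f_m=\pn_m'/\pn_m$, reduce to linear independence of the $f_m$ via the standard alternant argument, and prove independence by a residue computation: at a simple root the residue of $\pn_m'/\pn_m$ is $1$, and the ``largest surviving index'' argument uses exactly the triangular hypothesis. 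The two arguments are dual in a precise sense---the residue of $f_m$ at $\eta_j$ \emph{is} the entry $M(\pn_m,\eta_j)$ of the paper's matrix, so your Step~3 is really the statement that this matrix has full row rank, which is the exact criterion the paper isolates in its Appendix (Theorem~\ref{blowup}); your residue argument would in fact prove the ``if'' direction of that more general statement with no extra work. What the paper's construction buys is an explicit point of nonsingularity (useful if one later wants quantitative or algorithmic versions); what yours buys is brevity and the avoidance of the auxiliary functions $r_i,s_i$ altogether. Two minor points to tidy up: you only need one direction of the alternant equivalence in Step~2, and in Step~1 you should note (as you implicitly do) that $\det J$ is a polynomial while the factorization holds only off the zero set of $\prod_{m,j}\pn_m(y_j)$, so nonvanishing of the rational function $\det[f_m(y_k)]$ on that dense open set suffices.
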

A stronger version of this Proposition is Theorem~\ref{blowup} in Appendix~\ref{Jac-cond}. We prove the Proposition here however as it is all we need for Theorem~\ref{thm:independentinfluence}, and the proof is slightly more direct. 
\begin{proof} 
It suffices to show that there is a point at which the Jacobian of the mapping is nonsingular.
In what follows for a rational function $g$ let $M(g,\eta_j)$ denote the multiplicity of $\eta_j$ as a root of $g$; 
if $\eta_j$ is a pole of $g$ then $-M(g,\eta_j)$ is the order of the pole. 

By assumption, $M(\pn_i,\eta_j)=0$ for $j > i$ and $M(\pn_j,\eta_j)=1$ for all $j$.

We now construct a sequence of rational functions $r_1,\dotsc,r_{\ell}$ 
satisfying (with $\delta_{ij}=$ Kronecker delta):
\[ M(r_i,\eta_j) = \delta_{ij}. \] 
First, we set $r_1 = \pn_1$, since $M(\pn_1,\eta_j)=0$ for all $j > 1$. 

Inductively we construct $r_i$ for $i\geq 2$ as follows:
\be r_i = \pn_i\prod_{i'=1}^{i-1}r_{i'}^{-M(\pn_i,\eta_{i'})} .\label{ge-r} \ee
By construction, $M(r_i,\eta_j) = 0$ for $j< i$ and $M(r_i,\eta_i)= 1$. Moreover, $M(r_i,\eta_j)= 0$ for $j > i$ since 
\[M(\pn_i,\eta_j)= M(r_1,\eta_j)= \dotsm = M(r_{i-1},\eta_j). \]

Define $s_i(y) = \prod_{j=1}^\ell r_i(y_j)$ for $i=1,\dotsc,\ell$ so that $s_i$ is the product of $r_i$ evaluated at each indeterminate, just as $q_i$ is the product of $\pn_i$ evaluated at each indeterminate. In fact, we have 
\[s_i = q_i \prod_{i'=1}^{i-1}s_{i'}^{-M(\pn_i, \eta_{i'})}. \]

Let $Q$ and $S$ be the following mappings:
\[ (y_1,\dotsc,y_{\ell}) \xmapsto{Q} (q_1,\dotsc,q_{\ell}) \xmapsto{S} (s_1,\dotsc,s_{\ell}).\]
Consider the Jacobian of $S\circ Q$, evaluated at the point $\eta=(\eta_1,\dotsc, \eta_{\ell})$.
By construction 
\be \frac{\partial s_i}{\partial y_j}(\eta) = \Paren{\prod_{j'\neq j} r_i(\eta_{j'})} r'_i(\eta_j). \label{pspe} \ee
Now \[ \prod_{j'\neq j} r_i(\eta_{j'}) \neq 0 \iff i=j \] since $r_i(\eta_i) = 0$ and $M(r_i,\eta_j) = 0$ for any $j\neq i$. Moreover, $r'_i(\eta_i) \neq 0$ since $\eta_i$ is a simple root of $r_i$, so we conclude that $\frac{\partial s_i}{\partial y_j}(\eta) \neq 0 \iff i = j$. Thus, the Jacobian is a diagonal matrix with non-zero diagonal entries and is therefore invertible. 
The Proposition follows.
\end{proof}
\begin{proof}[Proof of Theorem~\ref{thm:independentinfluence}] 
Apply Prop.~\ref{prop:identifiability of symmetric polynomials} with the polynomials $\pn$ being as defined in Lemma~\ref{prop-recur}, and 
 the root $\eta_m$ of $\pn_m$ being the point $\beta_{m,1}$ provided by Prop.~\ref{prop:characterizing univariate polynomials}. \end{proof}

\section{Identifying models with general priors} \label{sec:gen}
\subsection{Preliminaries} We now treat the more general setting where $U_j$ have arbitrary priors, specified by the parameters
\[ \pi_j=\Pr(U_j=1).\]
It will be convenient to also use the notation $\pi_j(u)$, with $\pi_j=\pi_j(1)=1-\pi_j(0)$. 

Because the $U_j$'s are not uniformly sampled, the moments will take on a different form than before (we call the $n$th moment $q_n$):
\begin{align}
q_n \coloneqq \Pr(X_1 = \cdots = X_n = 1) 
&= \sum_{u \in \{0, 1\}^\ell} \prod_{j = 1}^\ell \pi_j(u_j) \alpha_{j, u_j}^n 
= \prod_{j = 1}^\ell \left(\pi_j(0)\alpha_{j, 0}^n + \pi_j(1)\alpha_{j, 1}^n \right) \nonumber\\ 
&= \prod_{j = 1}^\ell \left((1 - \pi_j)\alpha_{j, 0}^n + \pi_j\alpha_{j, 1}^n\right)  \nonumber\\ 
&= \prod_{j = 1}^\ell r_n(\alpha_{j,0},\alpha_{j,1},\pi_j) \label{qrfactor}
\end{align}
where we define $r_n(\alpha_0,\alpha_1,\pi)=\pi \alpha_1^n + (1-\pi)\alpha_0^n$. We may view $r_n(\alpha_{j,0},\alpha_{j,1},\pi_j)$ as the contribution of $U_j$ to the moment $q_n$.

\paragraph{Symmetries of the model.} The model has discrete and continuous symmetries as before. The moments $q_t$ are invariant to:
\begin{enumerate}
\item \emph{Discrete symmetries (hyperoctahedral).}
\begin{enumerate}[label=\textnormal{(\alph*)}] 
\item For any $j$, exchange $\alpha_{j0}$ and $\alpha_{j1}$, and replace $\pi_j$ with $1-\pi_j$.
\item Exchange any $j$ and $j'$. That is, for $j\neq j'$, exchange
$\alpha_{j0}$ with $\alpha_{j'0}$, $\alpha_{j1}$ with $\alpha_{j'1}$, and $\pi_j$ with $\pi_{j'}$. 
\end{enumerate}
\item
\emph{Continuous symmetries.} For any $j\neq j'$ and  $\lambda > 0$, the gauge transformation
\begin{align*} \begin{cases}
    (\alpha_{j0},\alpha_{j1}) &\mapsto (\lambda\alpha_{j0},\lambda\alpha_{j1}),  \\
    (\alpha_{j'0},\alpha_{j'1}) &\mapsto (\lambda^{-1}\alpha_{j'0},\lambda^{-1}\alpha_{j'1})
    \end{cases}.
\end{align*} 
\end{enumerate}
Of course, the model may only be identified up to these symmetries. Therefore, as before, we use the gauge symmetries to scale our parameters such that for all $j\in[\ell]$, letting $\gamma \coloneqq \Pr(X_1=1)^{1/\ell}$, we have that $r_1(\alpha_{j,0},\alpha_{j,1},\pi_j)=\gamma$. As in Sec.~\ref{sec:un}, the model is trivial if $\gamma=0$, so we assume throughout that $\gamma\neq 0$.

\subsection{Polynomial sequences}
It will be convenient to make the change of variables $\sigma_j = 2\pi_j - 1$.
Due to the factorization~\eqref{qrfactor}, we can while studying the polynomials $r$, 
focus on an arbitrary $j$, and drop the indices $j$ until we return to treating the polynomials $q$. We can now write
\begin{align*}
    r_n = \pi \alpha_1^n + (1 - \pi) \alpha_0^n = \frac{\alpha_1^n + \alpha_0^n}{2} + \sigma \frac{\alpha_1^n - \alpha_0^n}{2}.
\end{align*}
Let $d = (\alpha_1 - \alpha_0)/{2}$.
\begin{prop}
$r_n = 2(\gamma - \sigma d) r_{n - 1} - [(\gamma - \sigma d)^2 - d^2] r_{n - 2}$ for all $n > 1$.
\end{prop}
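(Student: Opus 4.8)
The plan is to recognize $r_n$ as a linear combination of the two geometric sequences $(\alpha_0^n)_n$ and $(\alpha_1^n)_n$, and then to exhibit the stated recurrence as the linear recurrence whose characteristic polynomial has roots $\alpha_0$ and $\alpha_1$. Concretely, from $r_n = \pi\alpha_1^n + (1-\pi)\alpha_0^n$ we see that $r_n$ is a fixed linear combination of $\alpha_0^n$ and $\alpha_1^n$; so it suffices to show that each of $(\alpha_0^n)_n$ and $(\alpha_1^n)_n$ satisfies $x_n = 2(\gamma-\sigma d)x_{n-1} - [(\gamma-\sigma d)^2 - d^2]x_{n-2}$, since the set of sequences satisfying a given linear recurrence is a vector space. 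Equivalently, it suffices to verify that $z^2 - 2(\gamma-\sigma d)z + [(\gamma-\sigma d)^2 - d^2]$ is the polynomial $(z-\alpha_0)(z-\alpha_1)$, i.e.\ that $\alpha_0 + \alpha_1 = 2(\gamma - \sigma d)$ and $\alpha_0\alpha_1 = (\gamma-\sigma d)^2 - d^2$.

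The first identity is where the gauge normalization enters: by the choice of scaling, $r_1 = \gamma$, and on the other hand $r_1 = \tfrac{\alpha_0+\alpha_1}{2} + \sigma\,\tfrac{\alpha_1-\alpha_0}{2} = \tfrac{\alpha_0+\alpha_1}{2} + \sigma d$, so $\tfrac{\alpha_0+\alpha_1}{2} = \gamma - \sigma d$, giving $\alpha_0 + \alpha_1 = 2(\gamma-\sigma d)$. Combined with $\alpha_1 - \alpha_0 = 2d$ (the definition of $d$), this yields $\alpha_1 = (\gamma-\sigma d) + d$ and $\alpha_0 = (\gamma-\sigma d) - d$, whence $\alpha_0\alpha_1 = (\gamma-\sigma d)^2 - d^2$ by difference of squares. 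This pins down the characteristic polynomial exactly as claimed.

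Putting it together: since $\alpha_0$ and $\alpha_1$ are the two roots of $z^2 - 2(\gamma-\sigma d)z + [(\gamma-\sigma d)^2 - d^2]$, both power sequences satisfy the recurrence for $n > 1$, hence so does any linear combination, in particular $r_n = \pi\alpha_1^n + (1-\pi)\alpha_0^n$. I do not expect any real obstacle here; the only point requiring care is to invoke the normalization $r_1 = \gamma$ (rather than trying to prove the recurrence for an unnormalized $r$, where the coefficients would not match), and to note that the argument is valid even if $\alpha_0 = \alpha_1$ (the repeated-root case), since the vector-space argument does not depend on the roots being distinct.
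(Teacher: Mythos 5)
Your proof is correct and rests on the same key observation as the paper's: that the normalization $r_1=\gamma$ forces $2(\gamma-\sigma d)=\alpha_0+\alpha_1$ and hence $(\gamma-\sigma d)^2-d^2=\alpha_0\alpha_1$, so the recurrence coefficients are exactly the elementary symmetric functions of $\alpha_0,\alpha_1$. The paper finishes by directly expanding $(\alpha_0+\alpha_1)r_{n-1}-\alpha_0\alpha_1 r_{n-2}$, whereas you invoke the standard characteristic-polynomial argument for linear recurrences; this is a cosmetic difference, not a different approach.
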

\begin{proof}
Observe that $r_0 = 1$ and recall that we have set $r_1=\gamma$. Note that $2(\gamma - \sigma d) = \alpha_1 + \alpha_0$ and $(\gamma - \sigma d)^2 - d^2 = \alpha_1\alpha_0$. So
\begin{align*}
2(\gamma - \sigma d) r_{n - 1} - [(\gamma - \sigma d)^2 - d^2] r_{n - 2} &= 
 (\alpha_1 + \alpha_0) r_{n - 1} - \alpha_0\alpha_1 r_{n - 2}\\
&= (\alpha_1+\alpha_0) \left(\frac{\alpha_1^{n-1} + \alpha_0^{n-1}}{2} + \sigma \frac{\alpha_1^{n-1} - \alpha_0^{n-1}}{2}\right) \\ 
& \quad     -\alpha_0\alpha_1 \left(\frac{\alpha_1^{n-2} + \alpha_0^{n-2}}{2} + \sigma \frac{\alpha_1^{n-2} - \alpha_0^{n-2}}{2}\right) \\
       &= \frac{\alpha_1^n + \alpha_0^n}{2} + \sigma \frac{\alpha_1^n - \alpha_0^n}{2} = r_n
\end{align*}
\end{proof}

We make a final change of variables to replace $(\sigma,d)$ by $(\fv,\sv)$: $\fv = \gamma - \sigma d$ and $\sv = d^2$. (This is invertible after quotienting by the hyperoctahedral symmetry of the model.) This yields the following three-term recurrence:
\begin{align*}
    r_0(\fv, \sv) &= 1\\
    r_1(\fv, \sv) &= \gamma\\
    r_n(\fv, \sv) &= 2\fv r_{n-1} - (\fv^2 - \sv)r_{n-2}.
\end{align*}
It is helpful to define the following family of polynomials $(p_n)$, which are very closely related to the polynomials $(r_n)$. In particular, we will see in the following proposition that they are the ``coefficient polynomials'' of the $r_n$.
\begin{align*}
    p_{-1}(\fv, \sv) &= 1\\
    p_{0}(\fv, \sv) &= 2\fv\\
    p_{n}(\fv, \sv) &= 2\fv p_{n-1} - (\fv^2 - \sv)p_{n-2}
\end{align*}

\begin{prop}\label{P12}
    For any $0 \le k \le n-1$, 
    \begin{align*} r_n &= p_k r_{n - k - 1} - (\fv^2 - \sv)p_{k-1}r_{n-k-2} \\
    p_n &= p_k p_{n - k - 1} - (\fv^2 - \sv)p_{k-1}p_{n-k-2} \end{align*}
\end{prop}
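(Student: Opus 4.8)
The plan is to exploit the fact that $(r_n)$ and $(p_n)$ satisfy \emph{the same} second‑order recurrence $f_n = 2\fv f_{n-1} - (\fv^2-\sv)f_{n-2}$, differing only in their two initial values. So the two displayed identities are really one identity applied twice, and I would first isolate it as a lemma: for every sequence $(f_n)$ satisfying that recurrence and every $0\le k\le n-1$,
\[
 f_n \;=\; p_k\, f_{n-k-1} \;-\; (\fv^2-\sv)\, p_{k-1}\, f_{n-k-2}.
\]
Taking $(f_n)=(r_n)$ gives the first claim and $(f_n)=(p_n)$ the second. When $k=n-1$ the right side involves $f_{-1}$; as is standard for a second‑order recurrence I would fix the convention $f_{-1} := (2\fv f_0 - f_1)/(\fv^2-\sv)$, which for $(p_n)$ is automatically consistent with the prescribed value $p_{-1}=1$ and forces $p_{-2}=0$, so the lemma statement is unambiguous.

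Next I would prove the lemma by induction on $k$. The base case $k=0$ is literally the defining recurrence, since $p_0 = 2\fv$ and $p_{-1}=1$. For the inductive step, assuming the identity at $k$, I would substitute the recurrence $f_{n-k-1} = 2\fv f_{n-k-2} - (\fv^2-\sv) f_{n-k-3}$ into the term $p_k f_{n-k-1}$, collect the coefficient of $f_{n-k-2}$, and recognize it as $2\fv p_k - (\fv^2-\sv) p_{k-1} = p_{k+1}$ by the defining recurrence for $(p_n)$; the remaining term is exactly $-(\fv^2-\sv)p_k f_{n-k-3}$, which is the identity at $k+1$. A little index bookkeeping confirms that the recurrence steps invoked are legitimate throughout the stated range (for $(p_n)$ the recurrence may be extended down to index $-2$, so the induction reaches $k=n-1$ directly; for $(r_n)$ it holds for indices $\ge 2$ and the endpoint $k=n-1$ uses the backward‑extended value $r_{-1}$ just defined).

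The only real obstacle is this endpoint bookkeeping — keeping track of which terms have negative indices and making sure the $(r_n)$‑recurrence is never invoked below index $2$ — but it is routine. I would also remark (and possibly include) the one‑line matrix version that explains the phenomenon: with $A=\left(\begin{smallmatrix} 2\fv & -(\fv^2-\sv)\\ 1 & 0\end{smallmatrix}\right)$ one has $\binom{f_n}{f_{n-1}} = A^{m}\binom{f_{n-m}}{f_{n-m-1}}$ for any solution $(f_n)$, and an immediate induction gives $A^{m}=\left(\begin{smallmatrix} p_{m-1} & -(\fv^2-\sv)p_{m-2}\\ p_{m-2} & -(\fv^2-\sv)p_{m-3}\end{smallmatrix}\right)$; reading off the top coordinate of $A^{k+1}\binom{f_{n-k-1}}{f_{n-k-2}}$ yields the lemma, and the $p$‑identity is simultaneously the case $f=p$ and the coordinate form of the associativity $A^{n}=A^{k+1}A^{n-k-1}$. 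I would present the explicit induction on $k$ in the body for self‑containedness and mention the matrix picture as motivation.
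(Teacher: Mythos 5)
Your proof is correct and is essentially the paper's own argument: induction on $k$, with the base case being the defining recurrence and the inductive step substituting the recurrence for $f_{n-k-1}$ and recognizing $2\fv p_k-(\fv^2-\sv)p_{k-1}=p_{k+1}$; the paper likewise notes that the $r$- and $p$-identities are the same argument run on two solutions of one recurrence. Your extra care with the $k=n-1$ endpoint (backward extension of $f_{-1}$) and the transfer-matrix remark are harmless embellishments on top of the same proof.
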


\begin{proof}
Fix $n$, and induct on $k$. $k=0$ is immediate from the definitions. The proofs for $r_n$ and $p_n$ are essentially identical as they rely only on the three-term recurrences (which are the same) and on the initial conditions $p_{-1}$ and $p_0$. We write out the argument for $r_n$: it amounts to showing that the expression for $k$ equals that for $k+1$:
\begin{align*} 
p_k r_{n - k - 1} - (\fv^2 - \sv)p_{k-1}r_{n-k-2} &= 
p_k (2\fv r_{n-k-2} - (\fv^2 - \sv) r_{n-k-3}) - (\fv^2 - \sv)p_{k-1}r_{n-k-2} \\
&= r_{n-k-2}(2\fv p_k - (\fv^2 - \sv)p_{k-1}) - (\fv^2 - \sv)p_{k}r_{n-k-3} \\
&= p_{k+1}r_{n-k-2} - (\fv^2 - \sv)p_kr_{n-k-3}.
\end{align*}
\end{proof}

In analogy to Section~\ref{sec:un-p-s}, where we studied the roots of the univariate polynomials $\pn_n$, we now need some understanding of where each $r_n$ (which is bivariate, in variables $\fv,\sv$) is zero. 

Notice that $r_i(0, 0) = 0$ for all $i \ge 2$. Since this is a common zero for all $r_i$, $i\geq 2$, we call this the trivial zero.

\begin{lemma}\label{L14} For $i\geq 2$
the only zero of $r_i$ on the curves $\fv = 0$ and $\fv^2 = \sv$ is the trivial zero.
\end{lemma}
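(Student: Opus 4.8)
The plan is to exploit the fact that on each of the two curves the three-term recurrence $r_n = 2\fv r_{n-1} - (\fv^2 - \sv) r_{n-2}$ degenerates, so that the relevant $r_i$ can be written in closed form; the claim then follows by inspection, using the standing assumption $\gamma \neq 0$.

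First consider the curve $\fv = 0$. Here $\fv^2 - \sv = -\sv$, so the recurrence becomes $r_n = \sv\, r_{n-2}$ for $n \ge 2$, with $r_0 = 1$ and $r_1 = \gamma$. An immediate induction gives $r_{2k} = \sv^k$ and $r_{2k+1} = \gamma\, \sv^k$ for all $k \ge 0$. Thus for $i \ge 2$ the restriction of $r_i$ to $\fv = 0$ is either $\sv^k$ or $\gamma\,\sv^k$ with $k \ge 1$; since $\gamma \neq 0$, either of these vanishes only when $\sv = 0$, i.e.\ at the trivial zero $(\fv,\sv) = (0,0)$.

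Next consider the curve $\fv^2 = \sv$. Here the coefficient $\fv^2 - \sv$ vanishes identically along the curve, so the recurrence collapses to $r_n = 2\fv\, r_{n-1}$ for $n \ge 2$, whence $r_n = (2\fv)^{\,n-1}\gamma$ for all $n \ge 1$. For $i \ge 2$, and since $\gamma \neq 0$, this is zero precisely when $\fv = 0$; combining $\fv = 0$ with $\fv^2 = \sv$ forces $\sv = 0$, so again the only zero is $(0,0)$.

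I do not expect a real obstacle here: the argument is a direct specialization of the recurrence to the two curves. The only thing to keep in mind is that the conclusion genuinely relies on $\gamma \neq 0$ — without it, every odd-indexed $r_i$ would vanish identically on $\fv = 0$, and every $r_i$ with $i \ge 2$ would vanish identically on $\fv^2 = \sv$ — which is exactly why the normalization $r_1 = \gamma \neq 0$ from Section~\ref{sec:gen} is invoked.
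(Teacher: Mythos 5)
Your proof is correct and takes essentially the same route as the paper: restrict the three-term recurrence to each curve, solve it in closed form, and invoke $\gamma\neq 0$. (Incidentally, your explicit formulas $r_{2k}(0,\sv)=\sv^{k}$ and $r_{2k+1}(0,\sv)=\gamma\,\sv^{k}$ are the correct ones; the paper's displayed case split on $\fv=0$ has the two parities swapped, though this does not affect the conclusion.)
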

\begin{proof}
First, for $\fv = 0$ the recursion takes the form $r_n(0, \sv) = \sv r_{n-2}$, and so:
    \begin{align*}
        r_n(0, \sv) = \begin{cases}
            \gamma \sv^{\lfloor n / 2 \rfloor} &\text{if } n \equiv 0 \mod 2\\
            \sv^{\lfloor n / 2 \rfloor} &\text{if } n \equiv 1 \mod 2
        \end{cases}
    \end{align*}
This implies that if $r_n(0, \sv) = 0$, then $\sv = 0$. (Notice that this also forces $n \ge 2$.)

Second, for $\fv^2 = \sv$: here the recursion takes the form $r_n(\fv, \fv^2) = 2\fv r_{n-1}(\fv, \fv^2)$, and therefore we have $r_n(\fv, \fv^2) = \gamma(2\fv)^{n-1}$. Therefore if $r_n(\fv, \fv^2) = 0$ for $n\geq 2$ then $\fv = 0$.
\end{proof}

\subsection{Common Zeros} A new phenomenon that we encounter, unlike in Section~\ref{sec:un}, is that we need to identify \emph{common zeros} of $r_i,r_j$ for $i\neq j$. 
First we make the following observation.

\begin{lemma}\label{L15} For no $i$ is there a nontrivial zero shared by $r_i$ and $r_{i+1}$, or by $r_i$ and $r_{i + 2}$. 

\end{lemma}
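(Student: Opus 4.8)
\textbf{Proof proposal for Lemma~\ref{L15}.}

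The plan is to run everything off the three-term recurrence $r_n = 2\fv r_{n-1} - (\fv^2-\sv)r_{n-2}$ together with Lemma~\ref{L14}, organizing the two assertions as an interlocking induction: first dispose of the case $\{r_i,r_{i+1}\}$ by (downward) induction on $i$, and then reduce the case $\{r_i,r_{i+2}\}$ to it. The small indices are free: since $r_0=1$ and $r_1=\gamma\neq 0$ are nonvanishing constants, for $i\le 1$ the polynomial $r_i$ has no zeros at all, so both statements hold vacuously, and we may assume $i\ge 2$ throughout.

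For $\{r_i,r_{i+1}\}$, suppose toward a contradiction that $(\fv_0,\sv_0)\neq(0,0)$ is a common zero of $r_i$ and $r_{i+1}$ for some $i\ge 2$, chosen minimal. Evaluating $r_{i+1}=2\fv r_i-(\fv^2-\sv)r_{i-1}$ at $(\fv_0,\sv_0)$ kills the first term and leaves $(\fv_0^2-\sv_0)\,r_{i-1}(\fv_0,\sv_0)=0$. If $\fv_0^2=\sv_0$, then $(\fv_0,\sv_0)$ is a zero of $r_i$ on the curve $\fv^2=\sv$ with $i\ge 2$, so by Lemma~\ref{L14} it is the trivial zero, contradicting $(\fv_0,\sv_0)\neq(0,0)$. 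Hence $r_{i-1}(\fv_0,\sv_0)=0$; but then $(\fv_0,\sv_0)\neq(0,0)$ is a common zero of $r_{i-1}$ and $r_i$, contradicting either the minimality of $i$ (when $i-1\ge 2$) or the fact that $r_{i-1}$ is a nonzero constant (when $i-1\le 1$).

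For $\{r_i,r_{i+2}\}$, suppose $(\fv_0,\sv_0)\neq(0,0)$ is a common zero of $r_i$ and $r_{i+2}$ with $i\ge 2$. Now $r_{i+2}=2\fv r_{i+1}-(\fv^2-\sv)r_i$ evaluated at $(\fv_0,\sv_0)$ kills the \emph{second} term and leaves $2\fv_0\,r_{i+1}(\fv_0,\sv_0)=0$. If $\fv_0=0$, then $(\fv_0,\sv_0)$ is a zero of $r_i$ on $\fv=0$ with $i\ge 2$, so Lemma~\ref{L14} forces it to be trivial, a contradiction. Hence $r_{i+1}(\fv_0,\sv_0)=0$, making $(\fv_0,\sv_0)$ a nontrivial common zero of $r_i$ and $r_{i+1}$, which contradicts the case already handled. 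I do not expect a real obstacle: the argument is a one-line manipulation of the recurrence plus the already-proved Lemma~\ref{L14}. The only thing to watch is the bookkeeping at small indices — verifying that whenever Lemma~\ref{L14} is invoked the relevant polynomial has index $\ge 2$, and that the downward induction terminates correctly against the nonvanishing of $r_0=1$ and $r_1=\gamma$.
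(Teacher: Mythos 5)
Your proposal is correct and follows essentially the same route as the paper: both arguments evaluate the three-term recurrence at the putative common zero, use Lemma~\ref{L14} to rule out $\fv_0=0$ and $\fv_0^2=\sv_0$, reduce the $\{r_i,r_{i+2}\}$ case to the $\{r_i,r_{i+1}\}$ case, and finish by a minimal-counterexample descent to $r_{i-1}=0$. The only (cosmetic) difference is that the paper runs a single minimality argument over both claims at once, whereas you settle the consecutive case first and then reduce to it.
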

\begin{proof} Pick the smallest such $i$ for which either claim fails, and let $(\fv_0, \sv_0)$ be a nontrivial zero. We know from Lemma \ref{L14} that $\fv_0 \neq 0$ and $\fv_0^2 \neq \sv_0$. If the claim fails because 
$r_i=r_{i+2}=0$, then writing $r_{i+2} = 2\fv_0r_{i+1} - (\fv_0^2 - \sv_0)r_{i}$, we see that necessarily also $r_{i+1}=0$. Then 
    \begin{align*}
        r_{i+1} &= 2\fv_0r_{i} - (\fv_0^2 - \sv_0)r_{i-1}\\
        0 &= (\fv_0^2 - \sv_0)r_{i-1}
    \end{align*}
    Since $\fv_0^2 \neq \sv_0$, it follows that $r_{i - 1} = 0$, which contradicts the minimality of $i$.
\end{proof}

The structure of pairwise-common roots in the $(\fv,\sv)$ plane is complex and has two especially interesting regions: the 
line $\fv = \gamma / 2$ and the parabola $\sv=\fv^2-\gamma^2$. The remainder of our analysis relies on common roots within the first of these regions. 

\subsection{Restricting to the line $\fv = \gamma / 2$}
On this line we have the recurrence $r_n(\gamma / 2, \sv) = \gamma r_{n-1} - (\gamma^2/4 - \sv)r_{n-2}$. Since $p_0(\gamma / 2, \sv) = \gamma$, the initial conditions for the $p$ polynomials and $r$ polynomials are identical evaluated on the line:
\begin{align*}
    p_{-1}(\gamma/2, \sv) = r_0(\gamma/2, \sv)\\
    p_0(\gamma/2, \sv) = r_1(\gamma/2, \sv)
\end{align*}
Also since we have $r_n(\gamma/2, \sv) = \gamma r_{n-1} - (\gamma^2 / 4 - \sv)r_{n-2}$ and $p_n(\gamma/2, \sv) = \gamma p_{n-1} - (\gamma^2 / 4 - \sv)p_{n-2}$, then it must be true that $r_n(\gamma/2, \sv)= p_{n-1}(\gamma/2, \sv)$. Now consider the univariate polynomials defined by the following recursion:
\begin{align*}
    \rtl_0(\sv) &= 0\\
    \rtl_1(\sv) &= 1\\
    \rtl_n(\sv) &= \gamma \rtl_{n-1} - (\gamma^2/4 - \sv)\rtl_{n-2}
\end{align*}
Notice that $r_{n-1}(\gamma/2, \sv) = p_{n-2}(\gamma/2, \sv) = \rtl_n(\sv)$, thus we will turn our attention to the zeros of the $\rtl$ polynomials. From Proposition~\ref{P12} we have the following corollary:
\begin{corollary}\label{C21}
For all $2 \le k \le n-1$,  $\rtl_n = \rtl_{k} \rtl_{n-k+1} - (\gamma^2/4 - \sv)\rtl_{k-1}\rtl_{n-k}$.
\end{corollary}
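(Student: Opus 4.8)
The corollary is meant to be read off from the second identity of Proposition~\ref{P12} (the one for the $p_n$'s) by specializing to the line $\fv=\gamma/2$ and using the identification established just before the corollary, namely $p_{m-2}(\gamma/2,\sv)=\rtl_m(\sv)$ for all $m$ (equivalently $p_k(\gamma/2,\sv)=\rtl_{k+2}(\sv)$), which in turn follows because the $r$, $p$, and $\rtl$ recursions on this line are all the same three-term recurrence $f_n=\gamma f_{n-1}-(\gamma^2/4-\sv)f_{n-2}$ and the initial data match up after the index shift. So no new combinatorial work is needed: the whole proof is a substitution plus a reindexing.

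\textbf{Step 1.} Start from Proposition~\ref{P12}: for $0\le k\le n-1$,
\[ p_n = p_k\,p_{n-k-1} - (\fv^2-\sv)\,p_{k-1}\,p_{n-k-2}. \]
Evaluate both sides at $\fv=\gamma/2$, so that $\fv^2-\sv = \gamma^2/4-\sv$, and replace each occurrence $p_m(\gamma/2,\sv)$ by $\rtl_{m+2}(\sv)$ (including the boundary cases $p_{-1}(\gamma/2,\sv)=1=\rtl_1(\sv)$ and $p_0(\gamma/2,\sv)=\gamma=\rtl_2(\sv)$). This yields, still for $0\le k\le n-1$,
\[ \rtl_{n+2} = \rtl_{k+2}\,\rtl_{n-k+1} - (\gamma^2/4-\sv)\,\rtl_{k+1}\,\rtl_{n-k}. \]

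\textbf{Step 2.} Reindex by setting $N=n+2$ and $K=k+2$. The range $0\le k\le n-1$ becomes $2\le K\le N-1$, and the displayed identity becomes exactly
\[ \rtl_N = \rtl_K\,\rtl_{N-K+1} - (\gamma^2/4-\sv)\,\rtl_{K-1}\,\rtl_{N-K}, \qquad 2\le K\le N-1, \]
which is the claim (renaming $N,K$ back to $n,k$). One can sanity-check the endpoint $k=2$: it reduces to $\rtl_n=\gamma\rtl_{n-1}-(\gamma^2/4-\sv)\rtl_{n-2}$, the defining recurrence.

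\textbf{Main obstacle.} There is essentially none — the only thing requiring care is bookkeeping: verifying that the index shift $p_m\leftrightarrow\rtl_{m+2}$ is applied consistently at the low-index boundary terms $p_{-1},p_0$, and that the transformed index range $2\le k\le n-1$ is exactly the image of $\{0\le k\le n-1\}$ under $k\mapsto k+2$ together with $n\mapsto n+2$. (Alternatively, if one wanted to avoid invoking Proposition~\ref{P12}, the identity could be proved directly by induction on $k$ in the manner of that proposition's proof, since $(\rtl_n)$ obeys the same three-term recurrence; but the specialization argument above is shorter.)
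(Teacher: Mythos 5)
Your proposal is correct and is exactly the intended derivation: the paper gives no separate proof, deducing the corollary directly from Proposition~\ref{P12} via the identification $p_{m}(\gamma/2,\sv)=\rtl_{m+2}(\sv)$ established just beforehand, which is precisely the substitution-and-reindexing you carry out. The index bookkeeping (range $0\le k\le n-1$ mapping to $2\le k\le n-1$, and the boundary cases $p_{-1}=\rtl_1$, $p_0=\rtl_2$) all checks out.
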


We can now prove the following, which has no analogue in the uniform-priors case but is key to the general-priors case. 
\begin{theorem}\label{T422}
    $\gcd(\rtl_i, \rtl_j) = \rtl_{\gcd(i,j)}$.
\end{theorem}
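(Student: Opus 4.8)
The plan is to run the classical argument that proves $\gcd(F_i,F_j)=F_{\gcd(i,j)}$ for Fibonacci-type divisibility sequences, carried out in the UFD $\rset[\sv]$ (so $\gcd$ is meant up to a nonzero constant). Write $Q=\gamma^2/4-\sv$ for the recurrence coefficient, and reindex Corollary~\ref{C21} (set $k=j$, $n=a+j$) into the \emph{addition formula} $\rtl_{a+j}=\rtl_j\,\rtl_{a+1}-Q\,\rtl_{j-1}\,\rtl_{a}$, valid for $j\ge 2$ and $a\ge 1$. Two coprimality facts drive everything. (i) $\gcd(\rtl_n,Q)=1$ for $n\ge 1$: evaluating the recurrence at the root $\sv=\gamma^2/4$ of $Q$ yields $\rtl_n(\gamma^2/4)=\gamma\,\rtl_{n-1}(\gamma^2/4)=\cdots=\gamma^{n-1}\ne 0$, and $Q$ is irreducible. (ii) $\gcd(\rtl_n,\rtl_{n-1})=1$ for $n\ge 1$: by induction from $\gcd(\rtl_1,\rtl_0)=\gcd(1,0)=1$, since a common factor of $\rtl_n,\rtl_{n-1}$ divides $\rtl_n-\gamma\rtl_{n-1}=-Q\rtl_{n-2}$, is coprime to $Q$ by (i) (it divides $\rtl_{n-1}$), hence divides $\rtl_{n-2}$, so the inductive hypothesis makes it a unit.

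For the direction $\rtl_{\gcd(i,j)}\mid\gcd(\rtl_i,\rtl_j)$ it suffices to show $\rtl_d\mid\rtl_{td}$ for every $t\ge1$, which I would prove by induction on $t$: the case $d=1$ is trivial since $\rtl_1=1$, and for $d\ge2$, $t\ge 2$, the addition formula with $a=(t-1)d$ writes $\rtl_{td}$ as $\rtl_d\,\rtl_{(t-1)d+1}-Q\,\rtl_{d-1}\,\rtl_{(t-1)d}$, both of whose terms are divisible by $\rtl_d$ by the inductive hypothesis.

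For the reverse direction I would use a Euclidean-algorithm induction whose engine is the reduction lemma: for $i>j\ge 2$, $\gcd(\rtl_i,\rtl_j)=\gcd(\rtl_{i-j},\rtl_j)$ up to a constant. Indeed, from $\rtl_i=\rtl_j\,\rtl_{i-j+1}-Q\,\rtl_{j-1}\,\rtl_{i-j}$, any common factor $g$ of $\rtl_i$ and $\rtl_j$ divides $Q\,\rtl_{j-1}\,\rtl_{i-j}$; since $g\mid\rtl_j$, facts (i) and (ii) make $g$ coprime to both $Q$ and $\rtl_{j-1}$, so $g\mid\rtl_{i-j}$; conversely the same identity shows every common factor of $\rtl_{i-j}$ and $\rtl_j$ divides $\rtl_i$. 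Then a strong induction on $i+j$ finishes: if $j\mid i$ (which covers $j=1$ and $j=i$), the divisibility direction already gives $\gcd(\rtl_i,\rtl_j)=\rtl_j=\rtl_{\gcd(i,j)}$; otherwise take $i>j$ without loss of generality (the statement is symmetric in $i,j$), apply the reduction lemma, and invoke the inductive hypothesis on the pair $(i-j,j)$, using $\gcd(i-j,j)=\gcd(i,j)$.

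I expect the only genuine subtlety, relative to the textbook integer argument, to be controlling the spurious factor $Q=\gamma^2/4-\sv$ that appears in both the three-term recurrence and the addition formula; this is exactly what fact (i) handles, and it is where the standing hypothesis $\gamma\ne0$ is used. Once that is in place, the remaining steps are routine manipulations with the recurrence and Corollary~\ref{C21}.
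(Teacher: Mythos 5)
Your proof is correct and follows essentially the same route as the paper: a subtractive Euclidean reduction on the indices driven by the addition formula of Corollary~\ref{C21}, together with coprimality of consecutive terms. The only cosmetic differences are that you re-derive $\gcd(\rtl_n,\rtl_{n-1})=1$ from scratch (the paper cites Lemma~\ref{L15}) and that your choice of substitution ($k=j$ rather than $k=i+1$) forces you to also control the factor $Q=\gamma^2/4-\sv$, which you do correctly; the paper's substitution puts $\rtl_i$ inside the $Q$-term and so sidesteps that step.
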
 (Note, this is $\gcd$ in the ring $\rset[\sv]$.) 
\begin{proof}
The theorem will follow from showing that:
\be
    \text{If $j > i$ then $\gcd(\rtl_i, \rtl_j) = \gcd(\rtl_i, \rtl_{j- i}).$} \label{H1} \ee

To show~\eqref{H1}: In Corollary \ref{C21}, since $j > i$, we can substitute $n=j$ and $k = i + 1$ to obtain:
    \begin{align*}
        \rtl_j = \rtl_{i + 1} \rtl_{j - i} - (\gamma^2/4 - \sv)\rtl_{i}\rtl_{j - i - 1}
    \end{align*}
This shows $\gcd(\rtl_i, \rtl_{j - i}) \mid \rtl_j$, and therefore $\gcd(\rtl_i, \rtl_{j - i}) \mid \gcd(\rtl_i, \rtl_j)$. 

It also shows that $\gcd(\rtl_i, \rtl_j) \mid \rtl_{i+1}\rtl_{j - i}$. From Lemma \ref{L15}, $\rtl_i$ and $\rtl_{i + 1}$ are relatively prime, it follows that $\gcd(\rtl_i, \rtl_j) \mid \rtl_{j - i}$.
Consequently $\gcd(\rtl_i, \rtl_j) \mid \gcd(\rtl_i, \rtl_{j-i})$.
\end{proof}

\subsection{Simple roots of $\rtl_n$ polynomials}

We aim to show here that the roots of $\rtl_n$ are simple and real. We start with some useful properties. Recall that $\gamma \in (0,1]$.

\begin{lemma} \label{lsn} For every $n\geq 1$,
\begin{enumerate}
    \item The leading coefficient of $\rtl_n$ is positive. \label{lsn0}
    \item $\rtl_n(0) = n(\frac{\gamma}{2})^{n-1} >0$. \label{lsn1}
    \item The degree of $\rtl_n$ is $\lfloor \frac{n-1}{2} \rfloor$. \label{lsn2}
\end{enumerate}
\end{lemma}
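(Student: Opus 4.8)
The plan is to prove all three parts of Lemma~\ref{lsn} simultaneously by induction on $n$, using the three-term recurrence $\rtl_n = \gamma \rtl_{n-1} - (\gamma^2/4 - \sv)\rtl_{n-2}$ together with the base cases $\rtl_0 = 0$, $\rtl_1 = 1$, $\rtl_2 = \gamma$. Note $\deg \rtl_1 = 0 = \lfloor 0/2\rfloor$ and $\deg \rtl_2 = 0 = \lfloor 1/2 \rfloor$, $\rtl_1(0) = 1 = 1\cdot(\gamma/2)^0$, $\rtl_2(0) = \gamma = 2(\gamma/2)^1$, and both leading coefficients are positive, so the base cases hold.

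For the inductive step at $n \ge 3$, I would examine the recurrence term-by-term. The polynomial $(\gamma^2/4 - \sv)\rtl_{n-2}$ has degree $1 + \lfloor (n-3)/2 \rfloor = \lfloor (n-1)/2 \rfloor$ with leading coefficient $-(\text{leading coeff of }\rtl_{n-2}) < 0$ (since the coefficient of $\sv$ in $\gamma^2/4 - \sv$ is $-1$), while $\gamma \rtl_{n-1}$ has degree $\lfloor (n-2)/2 \rfloor \le \lfloor (n-1)/2 \rfloor$. When $n$ is odd, $\lfloor (n-2)/2 \rfloor = (n-3)/2 < (n-1)/2 = \lfloor (n-1)/2 \rfloor$, so the degree of $\rtl_n$ is exactly $\lfloor (n-1)/2 \rfloor$ and its leading coefficient equals $-(-1)(\text{leading coeff }\rtl_{n-2}) > 0$. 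When $n$ is even, $\lfloor (n-2)/2 \rfloor = (n-2)/2 = \lfloor (n-1)/2 \rfloor$, so both contributing terms have the same degree $\lfloor (n-1)/2\rfloor$; here the leading coefficient of $\rtl_n$ is $\gamma\cdot(\text{leading coeff }\rtl_{n-1}) + (\text{leading coeff }\rtl_{n-2}) > 0$ since by induction both leading coefficients are positive and $\gamma > 0$ — no cancellation can occur. This simultaneously gives parts~\ref{lsn0} and~\ref{lsn2}. For part~\ref{lsn1}, evaluate at $\sv = 0$: $\rtl_n(0) = \gamma\rtl_{n-1}(0) - (\gamma^2/4)\rtl_{n-2}(0) = \gamma\cdot (n-1)(\gamma/2)^{n-2} - (\gamma^2/4)(n-2)(\gamma/2)^{n-3}$, and factoring out $(\gamma/2)^{n-1}$ — using $\gamma(\gamma/2)^{n-2} = 2(\gamma/2)^{n-1}$ and $(\gamma^2/4)(\gamma/2)^{n-3} = (\gamma/2)^{n-1}$ — gives $(\gamma/2)^{n-1}\big(2(n-1) - (n-2)\big) = n(\gamma/2)^{n-1}$, which is positive since $\gamma > 0$.

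The main obstacle, such as it is, is bookkeeping the parity cases correctly: the degree behaves differently depending on whether $n$ is odd or even, because the factor $(\gamma^2/4 - \sv)$ raises the degree of $\rtl_{n-2}$ by exactly one and this may or may not exceed the degree of $\rtl_{n-1}$. The one genuine thing to verify carefully is that in the even case there is no cancellation of the leading term — but this is immediate since by the inductive hypothesis on part~\ref{lsn0} both $\rtl_{n-1}$ and $\rtl_{n-2}$ have positive leading coefficients and they are added (not subtracted) with positive weights $\gamma$ and $1$. All three statements close together cleanly. (This lemma does not yet assert simplicity or reality of the roots; that is presumably handled in a subsequent result using these facts, e.g.\ via an interlacing argument analogous to Proposition~\ref{prop:characterizing univariate polynomials}.)
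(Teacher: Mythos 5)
Your proof is correct and follows essentially the same route as the paper's: induction on $n$ via the three-term recurrence, a direct evaluation at $\sv=0$ for part~2, and a parity split on $n$ to track how the degree-raising factor $(\gamma^2/4-\sv)$ interacts with $\gamma\rtl_{n-1}$, with positivity of the inductive leading coefficients ruling out cancellation in the even case. If anything, your case analysis for the leading coefficient is slightly more explicit than the paper's, but the substance is identical.
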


\begin{proof} Clearly, these statements are true of $\rtl_1$ and $\rtl_2$. Now suppose $n>2$. For Part~\ref{lsn0}, observe that the leading coefficient of $\gamma \rtl_{n-1}$ is positive by the inductive hypothesis, and the same is true of $\sv \rtl_{n-2} - (\gamma^2/4) \rtl_{n-2}$. Thus the leading coefficient of $\rtl_n$ is positive. 
For Part~\ref{lsn1}, observe:
\begin{align*}
\rtl_n(0) &= \gamma \rtl_{n-1}(0) - \frac{\gamma^2}{4} \rtl_{n-2}(0) = \gamma (n-1) (\frac{\gamma}{2})^{n-2} - \frac{\gamma^2}{4} (n-2) (\frac{\gamma}{2})^{n-3}
= n(\frac{\gamma}{2})^{n-1}
\end{align*}
For Part~\ref{lsn2}, first suppose $n$ is odd. Then $\rtl_{n-1},\rtl_{n-2}$ have the same degree and so $\rtl_n$ has the same degree as $\sv \rtl_{n-2}$, which is $(n-1)/2$. If $n$ is even, then $\rtl_{n-1}$ has degree $(n-2)/2$ and $\rtl_{n-2}$ has degree $(n-4)/2$. Since the leading coefficients of $\gamma \rtl_{n-1}$ and 
$(\sv-\gamma^2/4)\rtl_{n-2}$ have the same sign by the inductive hypothesis, the degree of $\rtl_n$ is $(n-2)/2$.
\end{proof}

With the aid of Lemma~\ref{lsn} we exhibit an interlacing property of the polynomials $\rtl_n$.

\begin{lemma} \label{lem:int}
For $k\geq 3$, $\rtl_k$ has real roots $\beta_{k,1},\dots,\beta_{k,\lfloor (k-1)/2 \rfloor}$, satisfying the following:
(a) For $n>1$, $\beta_{2n+1,1} < \beta_{2n,1} < \cdots < \beta_{2n,n-1} < \beta_{2n+1,n}<0$. 
(b) For $n\geq 1$, $-\infty < \beta_{2n+1,1} < \beta_{2n+2,1} < \cdots < \beta_{2n+1,n} < \beta_{2n+2,n} < 0$. 

In particular, each $\rtl_k$ has only simple roots.
\end{lemma}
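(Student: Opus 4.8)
The plan is to prove Lemma~\ref{lem:int} by induction on $k$, establishing the interlacing chains (a) and (b) simultaneously, in the classical style used for orthogonal polynomials — but adapted to the fact that here $\deg \rtl_k = \lfloor (k-1)/2 \rfloor$ increases only every other step, so the polynomials come in pairs of equal degree. First I would record the base cases: compute $\rtl_3, \rtl_4, \rtl_5$ explicitly (these are low-degree, degrees $1,1,2$ respectively) and check that their roots are real, negative, and interlace as claimed, using Lemma~\ref{lsn} parts \ref{lsn0}--\ref{lsn2} to pin down leading coefficients, the values at $\sv=0$ (all positive), and the degrees. The point of starting with both an odd and an even index is that the inductive step will need, as hypothesis, the interlacing of a consecutive pair $(\rtl_{n-1},\rtl_{n-2})$ together with the sign data at $0$ and at $\pm\infty$.

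The core of the argument is a sign-chasing step on the recurrence $\rtl_n(\sv) = \gamma\,\rtl_{n-1}(\sv) - (\gamma^2/4 - \sv)\,\rtl_{n-2}(\sv)$, exactly paralleling the proof of Prop.~\ref{prop:characterizing univariate polynomials}. At a root $\beta$ of $\rtl_{n-1}$ we have $\rtl_n(\beta) = -(\gamma^2/4 - \beta)\rtl_{n-2}(\beta) = \sv\text{-dependent}$ — and since all relevant roots are negative and $\gamma^2/4 - \beta > 0$ there, the sign of $\rtl_n(\beta)$ is $-\sign(\rtl_{n-2}(\beta))$. At a root $\beta$ of $\rtl_{n-2}$, $\rtl_n(\beta) = \gamma\,\rtl_{n-1}(\beta)$, so it has the sign of $\rtl_{n-1}(\beta)$. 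Combining with the known alternation of $\rtl_{n-1}$ on the intervals cut out by the roots of $\rtl_{n-2}$ (inductive hypothesis) and vice versa, I get that $\rtl_n$ alternates sign across the interlaced set of all roots of $\rtl_{n-1}$ and $\rtl_{n-2}$, forcing a root of $\rtl_n$ strictly between consecutive such points; together with the boundary intervals $(-\infty, \cdot)$ and $(\cdot, 0)$ — handled via Lemma~\ref{lsn}\ref{lsn1} ($\rtl_n(0)>0$) and Lemma~\ref{lsn}\ref{lsn0} (sign at $-\infty$ determined by parity of the degree) — this accounts for exactly $\lfloor(n-1)/2\rfloor$ simple real roots, all negative, and establishes the next link of chain (a) or (b) depending on the parity of $n$. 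The two chains feed each other: (a) at index $2n{+}1$ uses (b) at index $2n$, and so on.

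The main obstacle, and the place where care is needed, is the even/odd bookkeeping around where the "extra" root appears. When passing from degree $d$ to degree $d$ (the non-incrementing step, $n$ odd $\to$ $n$ even or the reverse) versus from degree $d$ to degree $d{+}1$, the count of interlacing intervals and the behavior at $-\infty$ differ, and one must check that the sign of the leading coefficient (always positive, by Lemma~\ref{lsn}\ref{lsn0}) is consistent with the parity of the degree so that the outermost interval argument produces a root rather than missing one. Concretely, I expect to split the inductive step into the case $n = 2m{+}1$ (proving the segment of chain (a), where $\rtl_{2m+1}$ and $\rtl_{2m}$ have degrees $m$ and $m{-}1$ — an incrementing step) and $n = 2m{+}2$ (proving the segment of chain (b), a non-incrementing step from degree $m$ back to... wait, $\rtl_{2m+2}$ has degree $m$ and $\rtl_{2m}$ has degree $m{-}1$, still incrementing); the asymmetry between which of $\rtl_{n-1}, \rtl_{n-2}$ carries the larger degree is what must be tracked. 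Once the alternation-and-counting is set up correctly in each parity class, simplicity of all roots is immediate since we have located $\lfloor(n-1)/2\rfloor$ distinct real roots of a polynomial of exactly that degree, and strict negativity of every root follows because $\rtl_n(0) > 0$ while the sign pattern already alternates to the left of $0$.
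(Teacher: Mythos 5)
Your proposal is correct and follows essentially the same route as the paper's proof: induction via sign-chasing on the three-term recurrence, evaluating $\rtl_n$ at the roots of $\rtl_{n-1}$ (where it equals $-(\gamma^2/4-\beta)\rtl_{n-2}(\beta)$), invoking the inductive interlacing to get sign alternation, and supplying the boundary roots from $\rtl_n(0)>0$ and the leading-coefficient/degree-parity data of Lemma~\ref{lsn} --- with the parity asymmetry you flag resolved exactly as you anticipate (the odd step gains two outer roots, one toward $-\infty$ and one in $(\beta_{2n,n-1},0)$; the even step gains only the one in $(\beta_{2n+1,n},0)$). The paper does not actually need the evaluations at roots of $\rtl_{n-2}$ that you also propose, but that is a cosmetic difference.
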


\begin{proof}It is easy to check that $\beta_{3,1}=-\frac{3}{4},\beta_{4,1}=-\frac14$, so $\beta_{3,1}<\beta_{4,1}$, and they are both contained in the interval $(-\infty,0)$. 

We proceed by induction for all $n>1$, treating (a), (b) separately.

(a) Observe that for $i\in  [n-1]$, 
\begin{align*}
\rtl_{2n+1}(\beta_{2n,i}) = -\left(\frac{\gamma^2}{4} - \beta_{2n,i}\right) \rtl_{2n-1}(\beta_{2n,i}).
\end{align*}

By the inductive hypothesis, $\beta_{2n,i}<0$, so $\frac{\gamma^2}{4}-\beta_{2n,i}$ is positive, and for convenience we will denote it by $c_i$. Now note as we range over all $i$, the sign of $\rtl_{2n-1}(\beta_{2n,i})$ changes every time we increment $i$ because $\rtl_{2n-1}$ interlaces $\rtl_{2n}$ by the inductive hypothesis. By the Intermediate Value Theorem, we have found $n-2$ roots in the intervals $(\beta_{2n,i},\beta_{2n,i+1})$ for $i=1,\dots,n-2$. We have two more roots to account for. Note $$\sign(\rtl_{2n+1}(\beta_{2n,1})) = - \sign(\rtl_{2n-1}(\beta_{2n,1})) = \sign\left(\lim_{v\to-\infty} \rtl_{2n-1}(v)\right) = -\sign\left(\lim_{v\to-\infty} \rtl_{2n+1}(v)\right).$$ The first equality holds by the recurrence relation; the second equality holds because $\beta_{2n-1,1}<\beta_{2n,1}$, and the third equality holds because the degrees of $\rtl_{2n-1},\rtl_{2n+1}$ are different by 1. Thus, $\rtl_{2n+1}$ has an odd number of roots, and therefore one root, in the interval $(-\infty,\beta_{2n,1})$. Finally, observe that $\rtl_{2n-1}(\beta_{2n,n-1})>0$ because $\beta_{2n-1,n-1} < \beta_{2n,n-1}$ and $\rtl_{2n-1}$ has positive leading coefficient by the previous lemma. Thus, $\rtl_{2n+1}(\beta_{2n,n-1}) < 0$. Since $\rtl_{2n+1}(0)>0$, $\rtl_{2n+1}$ has a root in the interval $(\beta_{2n,n-1},0)$. Thus we've accounted for all $n$ roots of $\rtl_{2n+1}$, and shown that they are all negative and interlace the roots of $s_{2n}$.

(b) 
We now show that the roots of $\rtl_{2n+2}$ interlace those of $\rtl_{2n+1}$ and $0$. First, observe that $\rtl_{2n+2}(\beta_{2n+1,i}) = -c_i \rtl_{2n}(\beta_{2n+1,i})$ for $i=1,\dots,n$, where we have made the obvious change of definition for $c_i>0$. Since $\rtl_{2n}(\beta_{2n+1,i})$ changes sign every time we increment $i$, by the Intermediate Value Theorem, $\rtl_{2n+2}$ has at least one root each in $(\beta_{2n+1,i},\beta_{2n+1,i+1})$ for $i=1,\dots,n-1$. Finally, we can see that $\rtl_{2n}(\beta_{2n+1,n})>0$ because $\beta_{2n,n-1}<\beta_{2n+1,n}$ and $\rtl_{2n}$ has positive leading coefficient. Thus, $\rtl_{2n+2}(\beta_{2n+1,n})<0$, so $\rtl_{2n+2}$ has a root in the interval $(\beta_{2n+1,n},0)$. We have now accounted for all $n$ roots of $\rtl_{2n+2}$. 
\end{proof}

\subsection{Decomposition of the polynomials $\rtl_n$} Theorem~\ref{T422} and Lemma~\ref{lem:int} actually imply that the polynomials $\rtl_n$ have considerably more structure than already revealed. This will be essential to our results. 
\begin{lemma} There are real polynomials $h_n$ such that $h_n$ has only simple real roots, $\gcd(h_n,h_m)=1$ for $n\neq m$, and for every $n$,
\begin{equation}
\rtl_n(\sv) = \prod_{d\mid n} h_d(\sv).
\label{hprods} \end{equation}
\end{lemma}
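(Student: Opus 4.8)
The plan is to build the $h_n$ one at a time by a M\"obius-type recursion and verify the three required properties by strong induction. Set $h_1 = \rtl_1 = 1$, and for $n \geq 2$ define
\[ h_n \;=\; \rtl_n \Big/ \prod_{\substack{d \mid n \\ d < n}} h_d . \]
By strong induction on $N$ I will establish: (i) $h_1,\dots,h_N$ are well-defined elements of $\rset[\sv]$ and $\rtl_n = \prod_{d \mid n} h_d$ for all $n \leq N$; (ii) each $h_n$ ($n\le N$) has only simple real roots; (iii) $\gcd(h_n,h_m)=1$ in $\rset[\sv]$ whenever $n\neq m$ are both $\leq N$. The base case $N=1$ is immediate.

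For the inductive step, assume the statement for $N-1$. First, $h_N$ is a genuine polynomial: for each proper divisor $d$ of $N$ we have $h_d \mid \rtl_d$ by (i) at level $N-1$, and $\rtl_d \mid \rtl_N$ because Theorem~\ref{T422} gives $\gcd(\rtl_d,\rtl_N)=\rtl_{\gcd(d,N)}=\rtl_d$; hence $h_d \mid \rtl_N$. By (iii) at level $N-1$ the polynomials $\{h_d : d\mid N,\ d<N\}$ are pairwise coprime, so in the principal ideal domain $\rset[\sv]$ their product divides $\rtl_N$, and $h_N \in \rset[\sv]$. This gives (i) at level $N$.

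Next, (ii): $\rtl_N$ is a nonzero polynomial (positive leading coefficient, Lemma~\ref{lsn}) all of whose roots are real and simple --- Lemma~\ref{lem:int} for $N\ge 3$, and the nonzero constants $\rtl_1=1$, $\rtl_2=\gamma$ otherwise. Since $h_N \mid \rtl_N$, every root of $h_N$ is a root of $\rtl_N$, hence real, and its multiplicity in $h_N$ is at most its multiplicity in $\rtl_N$, hence $1$; so $h_N$ has only simple real roots. For (iii), fix $m<N$ and put $g=\gcd(N,m)$, a proper divisor of $N$. From $h_N \mid \rtl_N$, $h_m \mid \rtl_m$, and Theorem~\ref{T422},
\[ \gcd(h_N,h_m) \mid \gcd(\rtl_N,\rtl_m) = \rtl_g = \prod_{d \mid g} h_d . \]
It thus suffices to show $h_N$ is coprime to $h_d$ for every $d\mid g$; each such $d$ is a proper divisor of $N$, so this follows from the key observation that $\rtl_N = h_N \cdot \prod_{d' \mid N,\, d'<N} h_{d'}$ is a factorization into \emph{pairwise coprime} factors: a common irreducible factor of $h_N$ and some $h_{d'}$ would occur to multiplicity $\geq 2$ in $\rtl_N$, contradicting that $\rtl_N$ is squarefree. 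Combined with the pairwise coprimality of the $h_d$ this yields $\gcd(h_N,h_m)=1$, completing the induction.

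The only substantive ingredients are the two uses of Theorem~\ref{T422} --- supplying $\rtl_d \mid \rtl_N$ and $\gcd(\rtl_N,\rtl_m)=\rtl_{\gcd(N,m)}$ --- and the squarefreeness of $\rtl_N$ from Lemma~\ref{lem:int}, which is what forces the recursively peeled-off factors to be mutually coprime; everything else is bookkeeping in $\rset[\sv]$. I expect the main point to get right is part (iii): one should first reduce $\gcd(h_N,h_m)$ to a divisor of $\rtl_{\gcd(N,m)}$ and only then invoke squarefreeness to eliminate each $h_d$ with $d \mid \gcd(N,m)$, rather than attempting a direct coprimality argument between $h_N$ and $h_m$.
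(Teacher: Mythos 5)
Your proposal is correct and follows essentially the same route as the paper: define $h_N = \rtl_N / \prod_{d\mid N,\,d<N} h_d$, use Theorem~\ref{T422} to get both $\rtl_d \mid \rtl_N$ and $\gcd(\rtl_N,\rtl_m)=\rtl_{\gcd(N,m)}$, and use the squarefreeness of $\rtl_N$ from Lemma~\ref{lem:int} to force the peeled-off factors to be pairwise coprime. Your handling of part (iii) — first reducing $\gcd(h_N,h_m)$ to a divisor of $\rtl_{\gcd(N,m)}$ and then disposing of each $h_d$ with $d\mid\gcd(N,m)$ via squarefreeness — is a slightly more explicit organization of the same case split the paper makes, and is fine.
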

\begin{proof}
We prove this by induction on $n$. For the $\gcd$ claim, while treating $n$ we address only $m$ s.t.\ $n\nmid m$. 

Since $\rtl_1=1$, the factorization and real-roots claims hold for $n$ prime, with $h_n=\rtl_n$; the $\gcd$ claim follows from Theorem~\ref{T422}. 

For $n$ composite, we know
from Theorem~\ref{T422} that $\rtl_n$ shares the roots of every $\rtl_d, \, d \mid n$;
by the inductive hypothesis this is equivalent to saying that $\rtl_n$ is divisible by $\prod_{d\mid n, d<n} h_d(\sv)$. 
Since $\rtl_n$ has only simple roots, any remaining factors of $\rtl_n$ cannot be shared with any $h_d$ for $d \mid n, d<n$.
Set $h_n=\rtl_n / \left(\prod_{d\mid n, d<n} h_d(\sv)\right)$. Again, since $\rtl_n$ has only simple roots, $h_n$ is relatively prime to every $h_d, d \mid n$. It remains to show that $h_n$ is relatively prime to $h_m$ for $n \nmid m$. Since $\gcd(\rtl_n,\rtl_m)=\rtl_{\gcd(n,m)}$, which does not include any factors of $h_m$, we know that $h_m$ is relatively prime to $\rtl_n$, and therefore to $h_n$.
\end{proof}

We will refer to the $h_n$ as ``atomic polynomials'' to recognize that they are what compose the $\rtl_n$ polynomials. Next we wish to work out their degrees, which we denote $f(n) = \deg h_n$.
\begin{lemma} $f(1)=0$, $f(2)=0$, and for $n>2$, 
$f(n)=\frac n2 \prod_{q \text{ prime, } q \mid n} (1-1/q)>0$. \end{lemma}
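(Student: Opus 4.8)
The plan is to read the degrees $f(n)=\deg h_n$ off of two facts already in hand: the factorization $\rtl_n(\sv)=\prod_{d\mid n}h_d(\sv)$ just established, and the degree formula $\deg\rtl_n=\Floor{(n-1)/2}$ from Lemma~\ref{lsn}, part~\ref{lsn2}. Since degrees add under multiplication, these combine into the single arithmetic identity $\sum_{d\mid n}f(d)=\Floor{(n-1)/2}$, valid for every $n\ge 1$. This determines $f$ uniquely (e.g.\ by M\"obius inversion), so it suffices to exhibit a function satisfying it that matches the claimed values. I would first note that the claimed value $\tfrac n2\prod_{q\mid n}(1-1/q)$ is exactly $\varphi(n)/2$, where $\varphi$ denotes the Euler totient; thus the assertion to be proved is $f(1)=f(2)=0$ and $f(n)=\varphi(n)/2$ for $n>2$.

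I would then verify $\sum_{d\mid n}f(d)=\Floor{(n-1)/2}$ for this candidate by strong induction on $n$. The base cases are immediate: $f(1)=\deg\rtl_1=0$, and then $f(1)+f(2)=\deg\rtl_2=\Floor{1/2}=0$ forces $f(2)=0$. For $n>2$, the divisors of $n$ other than $n$ itself are $1$, possibly $2$ (and then $2<n$, since $n>2$), and the divisors $d$ with $2<d<n$; using $f(1)=f(2)=0$ and the inductive hypothesis on the last group, together with the classical identity $\sum_{d\mid n}\varphi(d)=n$ and $\varphi(1)=\varphi(2)=1$, one obtains
\[
\sum_{\substack{d\mid n\\ d<n}} f(d)\;=\;\tfrac12\!\!\sum_{\substack{d\mid n\\ 2<d<n}}\!\!\varphi(d)\;=\;\tfrac12\bigl(n-\varphi(n)-1-[2\mid n]\bigr).
\]
Subtracting this from $\Floor{(n-1)/2}$ and splitting into the cases $n$ odd (where $\Floor{(n-1)/2}=(n-1)/2$ and $[2\mid n]=0$) and $n$ even (where $\Floor{(n-1)/2}=(n-2)/2$ and $[2\mid n]=1$) yields $f(n)=\varphi(n)/2=\tfrac n2\prod_{q\mid n}(1-1/q)$ in both cases. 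Positivity for $n>2$ is then immediate from $\varphi(n)\ge 2$; this also confirms retroactively that the atomic polynomial $h_n$ is non-constant for every $n>2$.

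I do not expect a genuine obstacle here: the analytic content—existence of the factorization into atomic polynomials, simplicity of their roots, and the degree of $\rtl_n$—has all been established in the preceding lemmas, so what remains is elementary. The only point demanding care is the parity bookkeeping: the general formula excludes the arguments $1$ and $2$, so one must track separately which divisors of $n$ contribute $0$ and which contribute $\varphi(d)/2$, and that is precisely what forces the split into even and odd $n$ in the final computation.
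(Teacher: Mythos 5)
Your proof is correct. Both you and the paper hinge on the same identity, namely that degrees add across the factorization $\rtl_n=\prod_{d\mid n}h_d$, giving $\sum_{d\mid n}f(d)=\deg \rtl_n=\Floor{(n-1)/2}$, and both invoke M\"obius inversion to see that this determines $f$ uniquely. Where you diverge is in the execution: the paper computes $f(n)$ \emph{forward}, expanding $f(n)=\sum_{d\mid n}F(d)\mu(n/d)$ as a signed sum over subsets of the prime divisors of $n$ and evaluating it in three cases ($n$ odd, $4\mid n$, and $n=2m$ with $m$ odd) according to how the floor interacts with the power of $2$; you instead recognize the closed form as $\varphi(n)/2$ and verify \emph{backward} that the candidate satisfies the divisor-sum identity, using the classical $\sum_{d\mid n}\varphi(d)=n$, which collapses the case analysis to a single parity split. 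Your computation checks out (in both parities the leftover is exactly $\varphi(n)/2$), and your route is shorter and less error-prone; the paper's has the modest advantage of deriving the formula rather than guessing it, but since the totient identity is standard, nothing of substance is lost. Positivity via $\varphi(n)\ge 2$ for $n>2$ is also handled correctly.
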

\begin{proof} From~\eqref{hprods}, $\deg \rtl_n=\sum_{d\mid n} f(d).$ Next perform M\"obius inversion in the division lattice to obtain an expression for $f(n)$ in terms of $F(d) \coloneq \deg \rtl_d =  \lfloor (d-1)/2 \rfloor$: that is, 
for $\mu$ the M\"obius function of the division lattice, $f(n)= \sum_{d\mid n} F(d) \mu(n/d)$. Letting the prime factorization of $n$ be $n=q_1^{\beta_1}\cdots q_k^{\beta_k}$ with $q_i<q_{i+1}$, this expression simplifies to $f(n) = \sum_{S\subseteq[k]} (-1)^{|S|} F(n/q^S)$ where $q^S \coloneq \prod_{i\in S}q_i$. Now consider three cases.

First, suppose $n$ is odd. Then $\lfloor (n-1)/2 \rfloor = (n-1)/2$, and $n/q^S$ is odd for any $S$. Observe
\begin{align*}
f(n) &= \frac12 \sum_{S\subseteq[k]} (-1)^{|S|} \left(\frac{n}{q^S}-1\right) = \frac{n}{2} \sum_{S\subseteq[k]} (-1)^{|S|} \frac{1}{q^S} = \frac{n}{2} \prod_{i=1}^k \left(1 - \frac{1}{q_i} \right).
\end{align*}
The second equality follows because $S$ has as many even-sized as odd-sized subsets. 

Second, suppose that $4\mid n$. Now
$n/q^S$ is even for any $S$ because $q^S$ contains at most one factor of $2$.
For even $n$, $\lfloor (n-1)/2 \rfloor = (n-2)/2$. The argument now follows the pattern for $n$ odd.

Third, suppose that $n=2m$, $m>1$ odd. Now $q_1=2,\beta_1=1$.
So for $S \subset [k]$ if $1\notin S$ then $F(n/q^S)=\frac{n}{q^S} - 2$, and if
$1\in S$ then $F(n/q^S)=\frac{n}{q^S} - 1$.

\begin{align*}
f(n) &= \frac12 \sum_{1\notin S} (-1)^{|S|}\left(\frac{n}{q^S} - 2\right) + \frac12 \sum_{1\in S} (-1)^{|S|}\left(\frac{n}{q^S}-1 \right) \\
&= \frac n2 \sum_{1\notin S} (-1)^{|S|} \frac{1}{q^S} - \frac n2 \sum_{1\notin S} (-1)^{|S|} \frac{1}{2q^S} = \frac{n}{4} \prod_{i=2}^k \left(1-\frac{1}{q_i}\right) = \frac{n}{2} \prod_{i=1}^k \left(1-\frac{1}{q_i}\right).
\end{align*}
\end{proof}

For every $n>2$, therefore, $h_n$ possesses a nonempty set of simple roots, called the atomic roots of $h_n$ or $\rtl_n$; these are roots of $\rtl_m$ if and only if $n \mid m$. See Fig.~\ref{fig:gcd}.

\begin{figure}[!h]
\begin{center}
\includegraphics[scale=0.35]{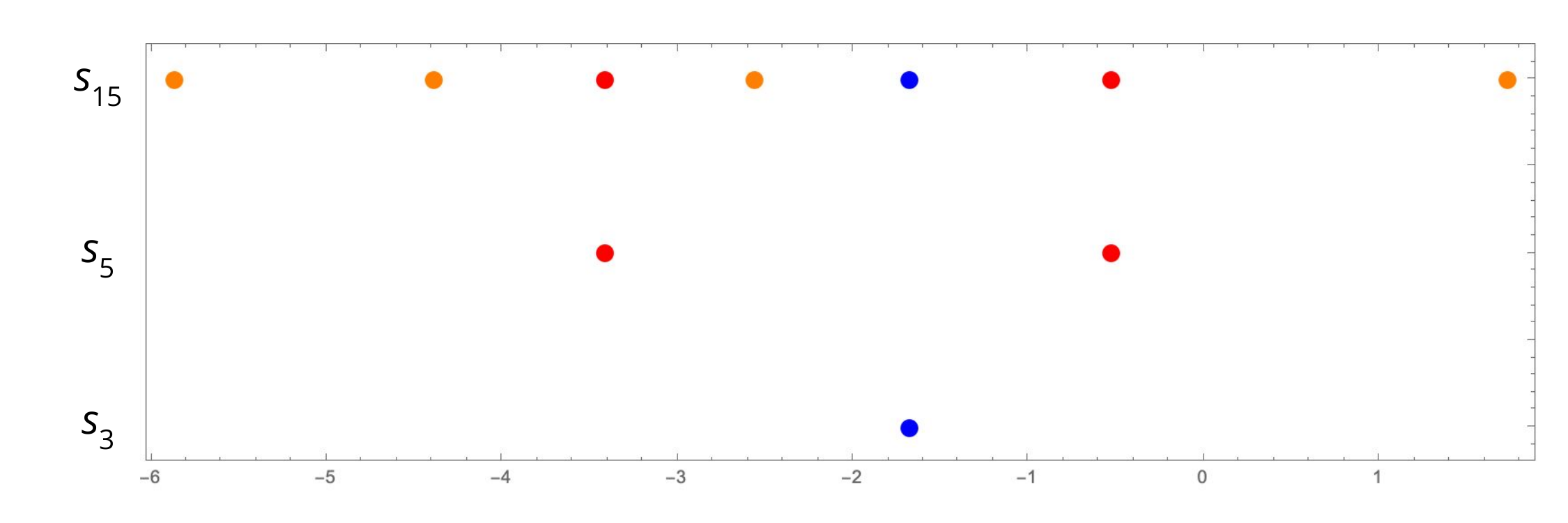}
\caption{The roots of $s_3,s_5$, and $s_{15}$. The blue, red and orange dots represent the roots of the $h_3$, $h_5$ and $h_{15}$ atomic polynomials respectively. For better spacing, the transformation $x\mapsto \log(-x)$ was applied to the horizontal axis.} \label{fig:gcd} 
\end{center}\end{figure}

\subsection{Identifiability: the Jacobian at perturbed common roots}
In this section we finally show how to generalize the method of Sec.~\ref{uni-method}, by leveraging roots shared between pairs of $\rtl_n$'s. Unlike in the basic method introduced in Sec.~\ref{uni-method}, it will not work to evaluate the Jacobian of the mapping exactly at a point specified by these common roots; instead, it will be necessary to slightly perturb the evaluation point.

For any pair $r_i$ and $r_{2i + 1}$, let $J_i$ be the Jacobian of the map $(\fv, \sv) \mapsto (r_i(\fv, \sv), r_{2i + 1}(\fv, \sv))$. Since we have $r_{2i + 1} = p_{i - 1}r_{i + 1} - (\fv^2 - \sv)p_{i-2}r_i$ we can write the partial derivatives of $r_{2i + 1}$ as follows:
\begin{align}
    \frac{\partial r_{2i + 1}}{\partial \fv} &= \frac{\partial p_{i - 1}}{\partial \fv}r_{i + 1} + \frac{\partial r_{i + 1}}{\partial \fv}p_{i - 1} - (\fv^2 - \sv)\left( \frac{\partial p_{i - 2}}{\partial \fv}r_{i} + \frac{\partial r_{i}}{\partial \fv}p_{i - 2} \right) - 2\fv p_{i-2}r_i\\
    \frac{\partial r_{2i + 1}}{\partial \sv} &= \frac{\partial p_{i - 1}}{\partial \sv}r_{i + 1} + \frac{\partial r_{i + 1}}{\partial \sv}p_{i - 1} - (\fv^2 - \sv)\left( \frac{\partial p_{i - 2}}{\partial \sv}r_{i} + \frac{\partial r_{i}}{\partial \sv}p_{i - 2} \right) + p_{i-2}r_i
\end{align}
We define auxiliary polynomials $F_i$ and $G_i$:
\begin{align}
    F_i &= \frac{\partial r_i}{\partial \fv} \frac{\partial r_{i+1}}{\partial \sv} - \frac{\partial r_i}{\partial \sv} \frac{\partial r_{i+1}}{\partial \fv}  \\
    G_i &= p_{i-2}\left(\frac{\partial r_i}{\partial \fv} + 2\fv\frac{\partial r_i}{\partial \sv}\right) - (\fv^2 - \sv)\left(\frac{\partial r_i}{\partial \fv} \frac{\partial p_{i-2}}{\partial \sv} - \frac{\partial r_i}{\partial \sv} \frac{\partial p_{i-2}}{\partial \fv} \right)
\end{align}
Using these expressions above, we have that the determinant of $J_i$ simplifies.
\begin{align}\label{NEWJ}
    \det(J_i) &= \frac{\partial r_i}{\partial \fv} \frac{\partial r_{2i + 1}}{\partial \sv} - \frac{\partial r_i}{\partial \sv} \frac{\partial r_{2i +1}}{\partial \fv} = r_{i + 1}\left( \frac{\partial r_i}{\partial \fv} \frac{\partial p_{i-1}}{\partial \sv} - \frac{\partial r_i}{\partial \sv} \frac{\partial p_{i-1}}{\partial \fv} \right) + p_{i-1}F_i + r_iG_i
\end{align}

\begin{lemma}\label{L02}
    On the line $\fv = \gamma / 2$, $\det(J_i) = -2\rtl_{i+2}\rtl_{i}\frac{\partial \rtl_{i+1}}{\partial \sv} + \rtl_{i+1}(F_i + G_i)$
\end{lemma}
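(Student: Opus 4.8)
\emph{The plan.} Substitute into the already-simplified determinant formula~\eqref{NEWJ} the identities that hold along the line $\fv=\gamma/2$. From $r_{n-1}(\gamma/2,\sv)=p_{n-2}(\gamma/2,\sv)=\rtl_n(\sv)$ we obtain, on the line, $r_i=p_{i-1}=\rtl_{i+1}$, $r_{i+1}=\rtl_{i+2}$, and $p_{i-2}=\rtl_i$. The two trailing summands of~\eqref{NEWJ}, $p_{i-1}F_i+r_iG_i$, then collapse immediately to $\rtl_{i+1}(F_i+G_i)$, so the entire content of the Lemma is to evaluate the first summand $r_{i+1}\bigl(\partial_\fv r_i\,\partial_\sv p_{i-1}-\partial_\sv r_i\,\partial_\fv p_{i-1}\bigr)$ on the line and show it equals $-2\rtl_{i+2}\rtl_i\,\partial_\sv\rtl_{i+1}$.

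The key move is to separate the two kinds of partial derivative. Because $\partial/\partial\sv$ is tangential to the line $\{\fv=\gamma/2\}$, restriction to the line commutes with it, so $\partial_\sv r_i$ and $\partial_\sv p_{i-1}$ each restrict to $\tfrac{d}{d\sv}\rtl_{i+1}(\sv)$. Hence on the line
\[ \partial_\fv r_i\,\partial_\sv p_{i-1}-\partial_\sv r_i\,\partial_\fv p_{i-1}\;=\;\partial_\sv\rtl_{i+1}\cdot\bigl(\partial_\fv r_i-\partial_\fv p_{i-1}\bigr)\big|_{\fv=\gamma/2}, \]
and using $r_{i+1}=\rtl_{i+2}$ on the line it remains only to identify $\bigl(\partial_\fv r_i-\partial_\fv p_{i-1}\bigr)\big|_{\fv=\gamma/2}$ with $-2\rtl_i$.

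To do this, set $e_n:=r_n-p_{n-1}$. Subtracting the defining three-term recurrence of $p_{n-1}$ from that of $r_n$ shows $e_n=2\fv e_{n-1}-(\fv^2-\sv)e_{n-2}$, with $e_0=r_0-p_{-1}=0$ and $e_1=r_1-p_0=\gamma-2\fv=-2\bigl(\fv-\tfrac{\gamma}{2}\bigr)$. The recurrence preserves divisibility by $\fv-\tfrac{\gamma}{2}$, so each $e_n$ is divisible by it; writing $g_n:=e_n/\bigl(\fv-\tfrac{\gamma}{2}\bigr)$, the $g_n$ satisfy the same recurrence with $g_0=0$, $g_1=-2$. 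Restricting to $\fv=\tfrac{\gamma}{2}$ turns this into $g_n(\gamma/2,\sv)=\gamma g_{n-1}-(\gamma^2/4-\sv)g_{n-2}$ with $g_0=0$, $g_1=-2$, which is precisely the recurrence and initial data of $-2\rtl_n$; hence $g_n(\gamma/2,\sv)=-2\rtl_n(\sv)$. Finally $\partial_\fv e_i=g_i+\bigl(\fv-\tfrac{\gamma}{2}\bigr)\partial_\fv g_i$, which on the line equals $g_i(\gamma/2,\sv)=-2\rtl_i(\sv)$, i.e.\ $\bigl(\partial_\fv r_i-\partial_\fv p_{i-1}\bigr)\big|_{\fv=\gamma/2}=-2\rtl_i$. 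Chaining the displayed equalities back into~\eqref{NEWJ} yields the claimed formula.

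The one point requiring care — the only real obstacle in an otherwise routine computation — is the asymmetry between the two partial derivatives: $\partial_\sv$ commutes with restriction to the line because it is tangential, whereas for the transverse derivative $\partial_\fv$ one must first factor out $\fv-\gamma/2$ and pass to the auxiliary polynomials $g_n$, whose restriction to the line is governed by the same three-term recurrence as $-2\rtl_n$. Everything else is substitution into~\eqref{NEWJ} and matching of recurrences.
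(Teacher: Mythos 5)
Your proof is correct, and its skeleton matches the paper's: start from~\eqref{NEWJ}, observe that on the line the trailing terms collapse to $\rtl_{i+1}(F_i+G_i)$, and reduce the cross term to evaluating $\partial_\fv r_i-\partial_\fv p_{i-1}$ at $\fv=\gamma/2$ (together with the easy fact that the $\sv$-partials of $r_i$ and $p_{i-1}$ agree there), which both arguments show equals $-2\rtl_i$. Where you diverge is in how that last identity is obtained. The paper gets it in one step from Proposition~\ref{P12}: taking $k=i-2$ gives $r_i=\gamma p_{i-2}-(\fv^2-\sv)p_{i-3}$ alongside $p_{i-1}=2\fv p_{i-2}-(\fv^2-\sv)p_{i-3}$, so $r_i-p_{i-1}=(\gamma-2\fv)p_{i-2}$ exactly, and differentiating in $\fv$ and restricting to the line immediately yields $-2p_{i-2}(\gamma/2,\sv)=-2\rtl_i$. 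You instead introduce $e_n=r_n-p_{n-1}$, prove divisibility by $\fv-\gamma/2$ by induction on the three-term recurrence, and identify the quotient $g_n$ on the line by matching recurrences with $-2\rtl_n$. Both are sound; the paper's route is shorter because Proposition~\ref{P12} hands you the closed form $e_n=(\gamma-2\fv)p_{n-2}$ globally (your $g_n$ is in fact $-2p_{n-2}$ everywhere, not only on the line), while your recurrence-matching argument is self-contained and makes the tangential-versus-transverse distinction explicit, which is a nice conceptual clarification of why the $\fv$-derivative is the only delicate point.
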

\begin{proof}
    Pick any point on the line $v = (\gamma/2, v_0)$. Notice that we can rewrite $r_i$ and $p_{i - 1}$ as follows:
    \begin{align*}
        p_{i -1} = 2\fv p_{i - 2} - (\fv^2 - \sv)p_{i-3}\\
        r_i = \gamma p_{i-2} - (\fv^2 - \sv)p_{i-3}
    \end{align*}
    Writing the partials of both equations we see:
    \begin{align*}
        \frac{\partial r_i}{\partial \sv} &= \gamma \frac{\partial p_{i-2}}{\partial \sv} - (\fv^2 - \sv)\frac{\partial p_{i-3}}{\partial \sv} + p_{i-3}\\
        \frac{\partial r_i}{\partial \fv} &= \gamma \frac{\partial p_{i-2}}{\partial \fv} - (\fv^2 - \sv)\frac{\partial p_{i-3}}{\partial \fv} - 2\fv p_{i-3}\\
        \frac{\partial p_{i-1}}{\partial \sv} &= 2\fv \frac{\partial p_{i-2}}{\partial \sv} - (\fv^2 - \sv)\frac{\partial p_{i-3}}{\partial \sv} + p_{i-3}\\
        \frac{\partial p_{i-1}}{\partial \fv} &= 2\fv \frac{\partial p_{i-2}}{\partial \fv} - (\fv^2 - \sv)\frac{\partial p_{i-3}}{\partial \fv} - 2\fv p_{i-3} + 2p_{i-2}
    \end{align*}
    Since we are only interested in the solutions on the line $\fv = \frac{\gamma}{2}$, we can now rewrite the following partials:
    \begin{align*}
        \frac{\partial p_{i-1}}{\partial \sv}(v) &= \frac{\partial r_{i}}{\partial \sv}(v)\\
        \frac{\partial p_{i-1}}{\partial \fv}(v) &= \frac{\partial r_{i}}{\partial \fv}(v) + 2\rtl_{i}(v_0)
    \end{align*}
    Finally we see that:
    \begin{align*}
        \frac{\partial p_{i-1}}{\partial \fv}(v) \frac{\partial r_{i}}{\partial \sv}(v) - \frac{\partial p_{i-1}}{\partial \sv}(v) \frac{\partial r_{i}}{\partial \fv}(v)
        &= \left( \frac{\partial r_{i}}{\partial \fv}(v) + 2\rtl_{i}(v_0)\right)\frac{\partial r_{i}}{\partial \sv}(v) - \frac{\partial r_{i}}{\partial \sv}(v) \frac{\partial r_{i}}{\partial \fv}(v)\\ 
        &= 2\rtl_{i}(v_0)\frac{\partial r_{i}}{\partial \sv}(v) = 2\rtl_{i}(v_0)\frac{\partial \rtl_{i+1}}{\partial \sv}(v_0)
    \end{align*}
    We know on the line $\rtl_{i+1} = r_i = p_{i - 1}$ for all $i$, so plugging this back into Equation \ref{NEWJ}, we get the following expression as desired:
    \begin{align*}
        \det (J_i)(v) = -2\rtl_{i+2}(v_0)\rtl_{i}(v_0)\frac{\partial \rtl_{i+1}}{\partial \sv}(v_0) + \rtl_{i+1}(v_0)(F_i(v) + G_i(v))
    \end{align*}
\end{proof}

Now we can begin proving the main theorem of this section.

\begin{theorem} \label{thm:ind-inf-genl}
The map $(\fv_1, \sv_1, \ldots, \fv_\ell, \sv_\ell) \mapsto (q_2, q_5, \ldots, q_{2n}, q_{4n + 1}, \ldots, q_{2\ell}, q_{4\ell + 1})$ is locally identifiable.
\end{theorem}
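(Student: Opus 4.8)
The plan is the Jacobian criterion, used as in Section~\ref{uni-method} but evaluated at a \emph{perturbed} common-root configuration. The map sends $\rset^{2\ell}$ to $\rset^{2\ell}$ (one output $q_{2i}$ and one output $q_{4i+1}$ per $i\in[\ell]$), so it suffices to show its Jacobian $J$ is nonsingular at one point, and since the $q_n$ are polynomials in the $\fv_k,\sv_k$, it suffices to show $\det J\not\equiv 0$. Write $J$ as an $\ell\times\ell$ array of $2\times 2$ blocks, the $(i,j)$ block $B_{ij}$ holding the derivatives of $(q_{2i},q_{4i+1})$ with respect to $(\fv_j,\sv_j)$. Since $q_n=\prod_k r_n(\fv_k,\sv_k)$, the top row of $B_{ij}$ carries the common factor $\prod_{k\neq j}r_{2i}(\fv_k,\sv_k)$ and the bottom row the common factor $\prod_{k\neq j}r_{4i+1}(\fv_k,\sv_k)$, so
\[
\det B_{ij}=\Big(\prod_{k\neq j}r_{2i}(\fv_k,\sv_k)\Big)\Big(\prod_{k\neq j}r_{4i+1}(\fv_k,\sv_k)\Big)\det J_{2i}(\fv_j,\sv_j),
\]
where $J_{2i}$ is the $2\times 2$ Jacobian of $(\fv,\sv)\mapsto(r_{2i},r_{4i+1})$ analysed in Lemma~\ref{L02}.

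I evaluate on the line $\fv_k=\gamma/2$, where $r_n(\gamma/2,\cdot)=\rtl_{n+1}$ and the decomposition $\rtl_n=\prod_{d\mid n}h_d$ into atomic polynomials applies. For each $j\in[\ell]$ let $v_0^{(j)}$ be an atomic root of $h_{2j+1}$ (nonempty and simple since $2j+1>2$); then $v_0^{(j)}$ is a root of $\rtl_m$ precisely when $(2j+1)\mid m$, hence a common root of $r_{2j}(\gamma/2,\cdot)=\rtl_{2j+1}$ and $r_{4j+1}(\gamma/2,\cdot)=\rtl_{4j+2}$. At the point with all $\sv_k=v_0^{(k)}$, Lemma~\ref{L02} (whose second summand drops because $\rtl_{2i+1}(v_0^{(i)})=0$) gives $\det J_{2i}(\gamma/2,v_0^{(i)})=-2\,\rtl_{2i+2}(v_0^{(i)})\,\rtl_{2i}(v_0^{(i)})\,\partial_\sv\rtl_{2i+1}(v_0^{(i)})$, which is nonzero: $\rtl_{2i}(v_0^{(i)})$ and $\rtl_{2i+2}(v_0^{(i)})$ are nonzero because $\gcd(\rtl_{2i+1},\rtl_{2i})=\gcd(\rtl_{2i+1},\rtl_{2i+2})=\rtl_1=1$ by Theorem~\ref{T422}, and $\partial_\sv\rtl_{2i+1}(v_0^{(i)})\neq 0$ since $v_0^{(i)}$ is a simple root of the single atomic factor $h_{2i+1}$, hence of $\rtl_{2i+1}$. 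The obstruction — the one flagged before the statement — is that the prefactor $\prod_{k\neq i}r_{2i}(\gamma/2,v_0^{(k)})=\prod_{k\neq i}\rtl_{2i+1}(v_0^{(k)})$ vanishes as soon as some $v_0^{(k)}$, $k\neq i$, is a root of $\rtl_{2i+1}$, i.e.\ as soon as $(2k+1)\mid(2i+1)$ for some $k<i$; then $B_{ii}$ is singular there. So I perturb: set $\sv_k=v_0^{(k)}+t$ with $t$ a real parameter and show $\det J$ is a nonzero polynomial in $t$.

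Put $D_i:=\{k\in[\ell]:(2k+1)\mid(2i+1)\}$, so $i\in D_i$ and every element of $D_i$ is $\le i$. Each factor $\rtl_{2i+1}(v_0^{(k)}+t)$ — and, $2k+1$ being odd, also $\rtl_{4i+2}(v_0^{(k)}+t)$ — vanishes to $t$-order $1$ when $k\in D_i$ and order $0$ otherwise, so every entry of $B_{ij}$ vanishes to $t$-order at least $|D_i|-[\,j\in D_i\,]$. A term of the Leibniz expansion of $\det J$ is indexed by a $2$-to-$1$ map $\Psi$ from the $\ell$ output-pairs (with rows $(i,1),(i,2)$) onto the $\ell$ input-pairs, and its $t$-order is at least $\sum_i\big(2|D_i|-[\Psi(i,1)\in D_i]-[\Psi(i,2)\in D_i]\big)$, which is minimised — at value $2\sum_i(|D_i|-1)$ — exactly when $\Psi(i,1),\Psi(i,2)\in D_i$ for all $i$. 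The combinatorial heart of the proof is that the only such $\Psi$ is the block-diagonal one $\Psi(i,\cdot)=i$: since $D_i\subseteq\{1,\dots,i\}$ and $i\in D_i$, an induction on $j=1,\dots,\ell$ forces input-pair $j$ to receive both its entries from output-pair $j$. The block-diagonal Leibniz terms sum to $\prod_i\det B_{ii}$; by the factorisation above, $\det B_{ii}=\big(\prod_{k\neq i}\rtl_{2i+1}(v_0^{(k)}+t)\big)\big(\prod_{k\neq i}\rtl_{4i+2}(v_0^{(k)}+t)\big)\det J_{2i}(\gamma/2,v_0^{(i)}+t)$, whose first two factors have $t$-order $|D_i|-1$ with nonzero leading coefficients (products of nonzero values, and of nonzero derivatives at simple roots, of $\rtl_{2i+1}$ and $\rtl_{4i+2}$), and whose third factor is nonzero at $t=0$; hence $\det B_{ii}$ has $t$-order exactly $2(|D_i|-1)$ with nonzero leading coefficient, and so $\prod_i\det B_{ii}$ has $t$-order exactly $2\sum_i(|D_i|-1)$ with nonzero leading coefficient. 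Every non-block-diagonal Leibniz term has strictly larger $t$-order, so $\det J\not\equiv 0$; it is therefore nonzero for some small real $t$, and Theorem~\ref{thm:ind-inf-genl} follows.

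The main obstacle is this final step, in two parts. First, one must recognise that evaluating at the bare common-root configuration fails — it is genuinely singular whenever $2i+1$ has a divisor $2k+1$ with $1\le k<i$ — so a perturbation is unavoidable. Second, one must control which Leibniz terms attain the minimal $t$-order, i.e.\ prove uniqueness of the block-diagonal matching; this rests squarely on the one-sidedness ($k\le i$) of the divisibility relation defining $D_i$. A subsidiary technical point is the three non-vanishing facts about $\rtl_{2i},\rtl_{2i+2}$ and $\partial_\sv\rtl_{2i+1}$ at an atomic root of $h_{2i+1}$, which come from Theorem~\ref{T422} together with the simplicity of roots in Lemma~\ref{lem:int}.
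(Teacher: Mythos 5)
Your proof is correct and follows essentially the same route as the paper's: the same $2\times 2$ block decomposition of the Jacobian, evaluation on the line $\fv=\gamma/2$ near atomic common roots of $\rtl_{2i+1}$ and $\rtl_{4i+2}$, Lemma~\ref{L02} for the diagonal blocks, and the one-sided divisibility $D_i\subseteq\{1,\dots,i\}$ (the paper's Observation~\ref{L01}) to show that only block-diagonal permutations achieve the dominant term. The only substantive difference is in how the perturbation is organized: you use a single parameter $t$ and compare orders of vanishing in $t$, whereas the paper picks the perturbed points $d_i$ via a nested sequence of thresholds $\epsilon_i$ and bounds $\lvert\det B\rvert$ quantitatively with constants $A$ and $M$; both executions are valid.
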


\begin{proof}
From Theorem \ref{T422} we know $\rtl_{2n+1} \mid \rtl_{4n + 2}$, let $c_n$ be an atomic root of $\rtl_{2n+1}$ and let $C = \{ c_n \}_{n = 1}^\ell$. For ease of notation, let $R_n = \{j \mid \rtl_{2n}(c_j) = 0\}$ and $|R_n| = \alpha_n$. Let us make the following observations about this set:

\begin{obs}\label{L01}
    For all $j \in R_n$, $j \le n$. Moreover, $\rtl_{4n + 2}(c_j) = 0$ if and only if $j \in R_n$
\end{obs}
\begin{proof}
    Suppose $j > n$, we know $c_j$ is an atomic root of $\rtl_{2j+1}$ and by definition of an atomic root $\rtl_{2n+1}(c_j) \neq 0$, thus if $j \in R_n$ then $j \le n$. 
    
    Clearly if $j \in R_n$ then $\rtl_{4n+2}(c_j) = 0$. If $\rtl_{4n + 2}(c_j) = 0$, since $c_j$ is an atomic root of $\rtl_{2j+1}$, then $\gcd(\rtl_{4n+2}, \rtl_{2j + 1}) = \rtl_{2j+1}$. This implies $2j + 1 \mid 4n + 2$ and thus $2j + 1 \mid 2n + 1$ and it follows $\rtl_{2n + 1}(c_j) = 0$ from Theorem \ref{T422} so $j \in R_n$.
\end{proof}

We know that each root of $\rtl_{2n+1}$ is simple for all $n$, thus $({\partial \rtl_{2n+1}}/{\partial \sv})(c_i) \neq 0$. Furthermore, since $\rtl_{2n+1}(c_n) = 0$ then we know that $\rtl_{2n}(c_n) \neq 0$ and $\rtl_{2n + 2}(c_n) \neq 0$. Lastly, notice that from the above observation, $i \in R_n$ if and only if $\rtl_{4n+2}(c_i) = 0$. Together with Theorem \ref{T422} this implies that there exists some $\delta > 0$ such that for all $n$, we have that $\rtl_{2n}$, $\rtl_{2n + 2}$, $\partial \rtl_{2n+1}/\partial \sv$, and $\rtl_{2i+1}$ and $\rtl_{4i+2}$ for all $i \in \{1, \ldots, \ell\} \setminus R_n$, are all bounded away from zero in the interval $I_n = [c_n - \delta, c_n + \delta]$, by some constant $A$.

Clearly $T = \cup_{n=1}^\ell I_n$ is closed and bounded, then so is the set $\{\gamma /2\} \times T$ and thus each of the functions in the following set attain a maximum over $\{\gamma /2\} \times T$: 
\begin{align*}
\bigcup_{n = 1}^\ell \left\{ |r_{2n}|, |r_{2n + 1}|, |r_{2n - 1}|, |\frac{\partial r_{2n}}{\partial \fv}|, |\frac{\partial r_{2n}}{\partial \sv}|, |\frac{\partial r_{4n+1}}{\partial \fv}|, |\frac{\partial r_{4n+1}}{\partial \sv}|, |F_{2n}|, |G_{2n}|\right\}
\end{align*}
Define $M$ to be the maximum over the maximums of these functions and 1. 

We now pick some small $\epsilon > 0$ to be specified later. Define the set $D_j = \{ n \mid j \in R_n \}$. We will pick our points as follows, for all $1 \le i \le \ell$, we pick $d_i \in I_i$ such that $0 < \rtl_{2k+1}(d_i) < \epsilon_i$ and $0 < \rtl_{4k + 2}(d_i) < \epsilon_i$ for all $k \in D_i$. If $1 \le i < \ell$, we define $\epsilon_{i + 1}$ as follows:
    \begin{align*}
        \epsilon_{i+1} = \min \left( \bigcup_{k \in D_i}\{|\rtl_{2k+1}(d_i)|, |\rtl_{4k + 2}(d_i)|\}\right)
    \end{align*}
Notice that this process results in a set of points $\{d_i\}$ where for all $n$ and $k \in R_n$ if $k \neq n$ then $\rtl_{2n+1}(d_n) < \epsilon_{n} < \rtl_{2n+1}(d_k)$ and $\rtl_{4n + 2}(d_n) < \epsilon_{n} < \rtl_{4n + 2}(d_k)$. Therefore for all $k \in R_n$:
    \begin{align}\label{INE}
        \left| \frac{\rtl_{2n+1}(d_n)}{\rtl_{2n+1}(d_k)} \right| \le 1 && \left| \frac{\rtl_{4n+2}(d_n)}{\rtl_{4n+2}(d_k)} \right| \le 1
    \end{align}

    We now evaluate the Jacobian at the point $(\gamma/2, d_1, \ldots, \gamma/2, d_\ell)$. We scale the rows corresponding to $q_{2n}$ and $q_{4n + 1}$ by the following two non-zero values respectively:
    \begin{align*}
        \rtl_{2n+1}(d_n) \prod_{k \in R_n} \frac{1}{\rtl_{2n+1}(d_k)} && \rtl_{4n+2}(d_n) \prod_{k \in R_n} \frac{1}{\rtl_{4n+2}(d_k)}
    \end{align*}
    We call the resulting matrix $B$, and notice that $B$ is non-singular if and only if the Jacobian evaluated at this point is non-singular. For ease of notation, we will refer to the $i,j$th entry of the matrix $B$ as $b_{i, j}$ and we will split the matrix $B$ into $2 \times 2$ blocks.
    \begin{align}
        B = \begin{pmatrix}
            N_{1,1} & \ldots & N_{1,\ell} \\
            \vdots & & \vdots \\
            N_{\ell, 1} & \ldots & N_{\ell, \ell}
        \end{pmatrix}
    \end{align}
    
    Notice that each $2 \times 2$ block has a similar structure, take any $n, m \in [\ell]$ and we can explicitly write the matrix corresponding to $N_{n, m}$.
    \begin{align*}
    N_{n, m} = 
    \begin{pmatrix}
        \frac{\rtl_{2n+1}(d_n)}{\rtl_{2n+1}(d_m)} \prod_{k \notin R_n} \rtl_{2n+1}(d_k) & 0\\
        0 & \frac{\rtl_{4n+2}(d_n)}{\rtl_{4n+2}(d_m)}\prod_{k \notin R_n} \rtl_{4n+2}(d_k)
    \end{pmatrix}
    \begin{pmatrix}
        \frac{\partial r_{2n}}{\partial \fv}(\frac{\gamma}{2}, d_m) & \frac{\partial r_{2n}}{\partial \sv}(\frac{\gamma}{2}, d_m) \\
        \frac{\partial r_{4n+1}}{\partial \fv}(\frac{\gamma}{2}, d_m)  & \frac{\partial r_{4n+1}}{\partial \sv}(\frac{\gamma}{2}, d_m) 
    \end{pmatrix}
    \end{align*}

    \begin{lemma}\label{L04}
        Suppose $b_{i,j} \in N_{n,m}$. If $m \in R_n$ then $|b_{i, j}| \le M^\ell$, otherwise $|b_{i,j}| \le \epsilon M^{\ell - 1}$
    \end{lemma}
    \begin{proof}
        First we will make the following observation, assume that $i$ and $j$ are odd, since $M \ge 1$ we have that:
        \begin{align}\label{MAG}
            \left| \frac{\partial r_{2n}}{\partial \fv}(d_m) \right| \prod_{k \notin R_n} \left| \rtl_{2n+1}(d_k) \right| < M^\ell
        \end{align}
        If $i$ is even we replace $2n$ with $4n + 1$, and if $j$ is even we replace $\fv$ with $\sv$. Notice that the same argument works in all of those cases. If $m \in R_n$, then we know that Equation \ref{INE} implies that both $|\rtl_{2n+1}(d_n)/\rtl_{2n+1}(d_m)| \le 1$ and $|\rtl_{4n+2}(d_n)/\rtl_{4n+2}(d_m)| \le 1$. Together with Equation \ref{MAG} this implies that $|b_{i,j}| < M^\ell$ as desired.

        Suppose that $m \notin R_n$, we will do the following analysis assuming both $i$ and $j$ are odd, but an identical argument works for either $i$ and $j$ even.
        \begin{align*}
            \frac{\partial r_{2n}}{\partial \fv}(\gamma/2, d_m) \cdot \frac{\rtl_{2n+1}(d_n)}{\rtl_{2n+1}(d_m)} \prod_{k \notin R_n} \rtl_{2n+1}(d_k) = \frac{\partial r_{2n}}{\partial \fv}(\gamma/2,d_m) \cdot \rtl_{2n+1}(d_n) \prod_{k \notin R_n \cup \{m\}} \rtl_{2n+1}(d_k)
        \end{align*}
        Notice $|\rtl_{2n+1}(d_n)| < \epsilon_n < \epsilon$ and all other terms are bounded above in magnitude by $M$, since there are at most $\ell - 1$ of those terms and $M > 1$ we have that $|b_{i,j}| < \epsilon M^{\ell - 1}$ as desired.
    \end{proof}

    The intuition for the rest of the proof is as follows. We know that for a block lower-triangular matrix, the determinant is the product of its diagonal blocks. This is because any permutation $\pi$ which selects an element of the lower-left block must also select an element from the zero block and so the term associated to $\pi$ contributes nothing to the determinant. 
    
    Similar to this, we will show that the product of the determinant of the diagonal blocks, $N_{n,n}$, contains a term which is not dependent on $\epsilon$. In the following lemma we will show that all $\pi$ which pick elements of off-diagonal blocks scale with $\epsilon$. This will imply for sufficiently small $\epsilon$ this matrix must be non-singular.

    Let $H \subset S_{2\ell}$ be the subgroup generated by the $\ell$ transpositions $(2n-1, 2n)$ for $n \in [\ell]$.
    \begin{lemma}\label{L05}
        For all $\pi \in S_{2\ell} \setminus H$:
        \begin{align*}
            \left|\sign(\pi) \prod_{i = 1}^\ell b_{i, \pi(i)} \right| < \epsilon M^{\ell^2 - 1}
        \end{align*}
    \end{lemma}
    \begin{proof}
         Pick any  $\pi \in S_{2\ell}$, notice that if for all $i \in \{1, \ldots, \ell\}$, $\pi(2i) \le 2i$ and $\pi(2i - 1) \le 2i$ then $\pi \in H$. Thus since we pick $\pi \in S_{2\ell} \setminus H$, then for some $i$ either $\pi(2i) > 2i$ or $\pi(2i - 1) > 2i$, in both cases we have that for some $j$, $b_{j, \pi(j)}$ lies in $N_{n, m}$ for $m > n$.

         From Observation \ref{L01}, we noted that if $m > n$, then $m \notin R_n$ and thus from Lemma \ref{L04} $|b_{j, \pi(j)}| < \epsilon M^{\ell - 1}$ and for all other $k \in \{1, \ldots, \ell\} \setminus \{j\}$, $|b_{k, \pi(k)}| < M^\ell$. From this, the inequality follows trivially.
    \end{proof}

    Now we turn our attention to the portion of the determinant contributed by all the permutations in $H$.
    \begin{align}
        \sum_{\pi \in H} \text{sign}(\pi) \prod_{i = 1}^\ell b_{i, \pi(i)} = \prod_{n = 1}^\ell \det(N_{n, n}) = \prod_{n = 1}^\ell \det(J_{2n}(d_n)) \left( 
        \prod_{k \notin R_n} \rtl_{2n+1}(d_k)\cdot \rtl_{4n+2}(d_k)
        \right)
    \end{align}
    Notice that using the definitions for $M$ and $A$, we can bound the magnitude of the following product from above and below.
    \begin{align}\label{E10}
        A^{2\ell^2 - 2(\alpha_1 + \cdots + \alpha_\ell)} \le \left| 
        \prod_{n = 1}^\ell \prod_{k \notin R_n} \rtl_{2n+1}(d_k)\cdot \rtl_{4n+2}(d_k)
        \right|
        \le M^{2\ell^2}
    \end{align}
    \begin{lemma}\label{L06}
        \begin{align*}
            \left| \prod_{n = 1}^\ell \det(J_{2n}(d_n)) \right| \ge 2^\ell A^{3\ell} - \epsilon 2^\ell(2^\ell - 1) M^{3\ell - 2}
        \end{align*}
    \end{lemma}
    \begin{proof}
        From Lemma \ref{L02}:
        \begin{align*}
            \prod_{n = 1}^\ell \det(J_{2n}(d_n)) = \prod_{n = 1}^\ell (-2\rtl_{2n+2}(d_n)\rtl_{2n}(d_n)\frac{\partial \rtl_{2n+1}}{\partial \sv}(d_n) + \rtl_{2n+1}(d_n)(F_{2n}(\gamma/2, d_n) + G_{2n}(\gamma/2, d_n)) )
        \end{align*}
        We will first bound the magnitude of the term of this product that results from picking the left term of each factor. From the definition of $A$ we get the following bound.
        \begin{align*}
            \left| \prod_{n = 1}^\ell -2\rtl_{2n+2}(d_n)\rtl_{2n}(d_n)\frac{\partial \rtl_{2n+1}}{\partial \sv}(d_n) \right| \ge 2^\ell A^{3\ell}
        \end{align*}
        All of the rest of the terms must include $\rtl_{2n+1}(d_n)$ for some $n$, and we know that $|\rtl_{2n+1}(d_n)| < \epsilon$ for all $n$. Using this fact and the definition of $M$, we observe the following inequalities.
        \begin{align*}
            |\rtl_{2n+1}(d_n)(F_{2n}(\gamma/2,d_n) + G_{2n}(\gamma/2,d_n)| &< \epsilon (2M)\\
            |-2\rtl_{2n+2}(d_n)\rtl_{2n}(d_n)\frac{\partial \rtl_{2n + 1}}{\partial \sv}(d_n)| &< 2M^3
        \end{align*}
        Clearly $2M^3 > \epsilon(2M)$ so each of the $2^\ell - 1$ other terms in the product are bounded above in magnitude by $\epsilon 2^\ell M^{3\ell - 2}$. Thus the magnitude of the product of the determinants is bounded from below by $2^\ell A^{3\ell} - \epsilon 2^\ell(2^\ell - 1) M^{3\ell - 2}$ as desired.
    \end{proof}

    Using Lemma \ref{L05}, Lemma \ref{L06}, and Equation \ref{E10} we bound the magnitude of the determinant of $B$ from below.
    \begin{align*}
        \left| \det(B) \right| &\ge \left| \sum_{\pi \in H} \sign(\pi) \prod_{i = 1}^\ell b_{i, \pi(i)}\right| - \left|\sum_{\pi \in S_{2\ell} \setminus H} \sign(\pi) \prod_{i = 1}^\ell b_{i, \pi(i)} \right|\\
        &\ge \left( 
        \prod_{k \notin R_n} \rtl_{2n+1}(d_k)\cdot \rtl_{4n+2}(d_k)
        \right)\left( 2^\ell A^{3\ell} - \epsilon 2^\ell(2^\ell - 1) M^{3\ell - 2} \right) - \epsilon (2\ell)! M^{\ell^2 - 1}\\
        &\ge 2^\ell A^{2\ell^2 + 3\ell - 2(\alpha_1 + \cdots + \alpha_\ell)} - \epsilon\left( 2^\ell(2^\ell - 1)M^{2\ell^2 + 3\ell - 2}
        (2\ell)! M^{\ell^2 - 1}
        \right)
    \end{align*}
    Notice that for $\epsilon$ sufficiently small, we have that $|\det(B)| > 0$ and therefore the Jacobian of our desired map is non-singular. This implies that our desired map is locally identifiable, concluding the proof.
    \end{proof}

\section*{Open questions} 

A fundamental issue is whether there is an algorithm for \emph{efficiently} identifying the model from its statistics. Settling this in the positive would be the ideal way also of proving full identifiability. 

A natural question is whether the product in Eqn.~\eqref{munj} can be replaced by other symmetric functions; even more generally, one may consider the  situation in which the effect of the latent variables $U_1,\ldots,U_\ell$ on the observable variables is invariant not under the permutation group $S_\ell$ but under, say, a transitive subgroup of $S_\ell$.

A separate direction to pursue is, under what conditions can the model be identified if it has $n$ observables $X_1,\ldots,X_n$ which are independent, but not necessarily iid, conditional on $\cU$. (This is commonly the assumed setting for RBM's.) That is, Eqn.~\eqref{munj} would be replaced by 
$\mun_S = \prod_{j=1}^\ell \frac{\prod_{i\in S} \alpha_{i,j,0}+ \prod_{i \in S}\alpha_{i,j,1}}{2}$
for each $S\subseteq [n]$. As a positive indication, for the case $\ell=1$, but with an arbitrary finite range for $\cU$, identification was shown~\cite{gordon2021source} to be possible under such conditions, without necessitating any increase in $n$.

A full understanding of this family of problems requires also extending to non-binary $X_i$ and $U_j$. The former is likely straightforward following the lead of~\cite{FanLi22}, but non-binary $U_j$ will demand replacing our two-dimensional space $(\fv,\sv)$ by a higher-dimensional space and, perhaps, generalizing our approach through pairwise-common zeros, to zeros shared by larger assemblies of polynomials.

\section*{Acknowledgments}
We thank Caroline Uhler, Bernd Sturmfels, Yulia Alexandr and Guido Mont\'ufar for helpful discussions.

\bibliography{refs,refs2}

\begin{thebibliography}{10}

\bibitem{ackleyHS85}
D.~H. Ackley, G.~E. Hinton, and T.~J. Sejnowski.
\newblock A learning algorithm for {B}oltzmann machines.
\newblock {\em Cognitive Science}, 9:147--169, 1985.

\bibitem{Blischke64}
W.~R. Blischke.
\newblock Estimating the parameters of mixtures of binomial distributions.
\newblock {\em Journal of the American Statistical Association},
  59(306):510--528, 1964.
\newblock \href {https://doi.org/10.1080/01621459.1964.10482176}
  {\path{doi:10.1080/01621459.1964.10482176}}.

\bibitem{CuetoMS10}
M.~A. Cueto, J.~Morton, and B.~Sturmfels.
\newblock Geometry of the restricted {B}oltzmann machine.
\newblock In M.~A.~G. Viana and H.~P. Wynn, editors, {\em Algebraic Methods in
  Statistics and Probability II}, volume 516 of {\em Contemporary Mathematics},
  pages 135--153. 2010.
\newblock \href {https://doi.org/10.1090/conm/516}
  {\path{doi:10.1090/conm/516}}.

\bibitem{Prony1795}
R.~de~Prony.
\newblock Essai exp\'erimentale et analytique.
\newblock {\em J. \'Ecol. Polytech.}, 1(2):24--76, 1795.

\bibitem{drtonSS07}
M.~Drton, B.~Sturmfels, and S.~Sullivant.
\newblock Algebraic factor analysis: tetrads, pentads and beyond.
\newblock {\em Probab. Theory Relat. Fields}, 138:463--493, 2007.
\newblock \href {https://doi.org/10.1007/s00440-006-0033-2}
  {\path{doi:10.1007/s00440-006-0033-2}}.

\bibitem{FanLi22}
Z.~Fan and J.~Li.
\newblock Efficient algorithms for sparse moment problems without separation,
  2022.
\newblock \href {https://arxiv.org/abs/2207.13008} {\path{arXiv:2207.13008}}.

\bibitem{freundH91}
Y.~Freund and D.~Haussler.
\newblock Unsupervised learning of distributions on binary vectors using two
  layer networks.
\newblock In J.~Moody, S.~Hanson, and R.~P. Lippmann, editors, {\em Proc. 4th
  Int'l Conf. on Neural Information Processing Systems}, pages 912--919.
  Morgan-Kaufmann, 1991.
\newblock URL:
  \url{https://proceedings.neurips.cc/paper/1991/file/33e8075e9970de0cfea955afd4644bb2-Paper.pdf}.

\bibitem{gordon2020sparse}
S.~Gordon, B.~Mazaheri, L.~J. Schulman, and Y.~Rabani.
\newblock The sparse {H}ausdorff moment problem, with application to topic
  models, 2020.
\newblock \href {https://arxiv.org/abs/2007.08101} {\path{arXiv:2007.08101}}.

\bibitem{gmrs23}
S.~L. Gordon, B.~Mazaheri, Y.~Rabani, and L.~Schulman.
\newblock Causal inference despite limited global confounding via mixture
  models.
\newblock In M.~van~der Schaar, C.~Zhang, and D.~Janzing, editors, {\em Proc.
  Second Conference on Causal Learning and Reasoning}, volume 213 of {\em
  Proceedings of Machine Learning Research}, pages 574--601. PMLR, 11--14 Apr
  2023.
\newblock URL: \url{https://proceedings.mlr.press/v213/gordon23a.html}.

\bibitem{gordon2021source}
S.~L. Gordon, B.~Mazaheri, Y.~Rabani, and L.~J. Schulman.
\newblock Source identification for mixtures of product distributions.
\newblock In {\em Proc. 34th Ann. Conf. on Learning Theory - {COLT}}, volume
  134 of {\em Proc. Machine Learning Research}, pages 2193--2216. {PMLR}, 2021.
\newblock URL: \url{http://proceedings.mlr.press/v134/gordon21a.html}.

\bibitem{Hinton99}
G.~E. Hinton.
\newblock Products of experts.
\newblock In {\em Proc. 9th Int'l Conf. on Artificial Neural Networks},
  volume~1, pages 1--6, 1999.

\bibitem{Hinton02}
G.~E. Hinton.
\newblock Training products of experts by minimizing contrastive divergence.
\newblock {\em Neural Computation}, 14(8):1771--1800, 2002.
\newblock \href {https://doi.org/10.1162/089976602760128018}
  {\path{doi:10.1162/089976602760128018}}.

\bibitem{Hinton10}
G.~E. Hinton.
\newblock A practical guide to training restricted {B}oltzmann machines,
  version 1 ({UTML}2010-003).
\newblock Technical report, U Toronto, 2010.

\bibitem{hintonOT06}
G.~E. Hinton, S.~Osindero, and Y.-Wh. Teh.
\newblock A fast learning algorithm for deep belief nets.
\newblock {\em Neural Comput.}, 18(7):1527–1554, 2006.
\newblock \href {https://doi.org/10.1162/neco.2006.18.7.1527}
  {\path{doi:10.1162/neco.2006.18.7.1527}}.

\bibitem{HuangMWL22}
X.~Huang, A.~Mallya, T.~Wang, and M.~Liu.
\newblock Multimodal conditional image synthesis with product-of-experts
  {GANs}.
\newblock {\em European Conference on Computer Vision}, 13676:91--109, 2022.

\bibitem{KKMMR18}
Y.~Kim, F.~Koehler, A.~Moitra, E.~Mossel, and G.~Ramnarayan.
\newblock How many subpopulations is too many? {E}xponential lower bounds for
  inferring population histories.
\newblock In L.~Cowen, editor, {\em Int'l Conf. on Research in Computational
  Molecular Biology}, volume 11457 of {\em Lecture Notes in Computer Science},
  pages 136--157. Springer, 2019.
\newblock \href {https://doi.org/10.1007/978-3-030-17083-7_9}
  {\path{doi:10.1007/978-3-030-17083-7_9}}.

\bibitem{LiuSLSBSC21}
A.~Liu, M.~Sap, X.~Lu, S.~Swayamdipta, C.~Bhagavatula, N.~A. Smith, and
  Y.~Choi.
\newblock {DEXPERTS}: Decoding-time controlled text generation with experts and
  anti-experts.
\newblock {\em European Conference on Computer Vision}, 13676:91--109, 2022.

\bibitem{longS10}
P.~M. Long and R.~A. Servedio.
\newblock Restricted {B}oltzmann machines are hard to approximately evaluate or
  simulate.
\newblock In {\em Proc. 27th Int'l Conf. on Machine Learning}, ICML, page
  703–710. Omnipress, 2010.
\newblock URL: \url{https://icml.cc/Conferences/2010/papers/115.pdf}.

\bibitem{MartensCPZ13}
J.~Martens, A.~Chattopadhyay, T.~Pitassi, and R.~Zemel.
\newblock On the representational efficiency of restricted {B}oltzmann
  machines.
\newblock In {\em Proc. Neurips}, volume~26, pages 2877--2885. Curran
  Associates, 2013.
\newblock URL:
  \url{https://proceedings.neurips.cc/paper_files/paper/2013/file/7bb060764a818184ebb1cc0d43d382aa-Paper.pdf}.

\bibitem{Montufar18}
G.~Mont{{\'u}}far.
\newblock Restricted {B}oltzmann machines: Introduction and review, 2018.
\newblock \href {https://arxiv.org/abs/1806.07066} {\path{arXiv:1806.07066}}.

\bibitem{MontufarM15}
G.~Mont{{\'u}}far and J.~Morton.
\newblock Discrete restricted {B}oltzmann machines.
\newblock {\em J. Machine Learning Research}, 16(21):653--672, 2015.
\newblock URL: \url{http://jmlr.org/papers/v16/montufar15a.html}.

\bibitem{MontufarM17}
G.~Mont\'ufar and J.~Morton.
\newblock Dimension of marginals of {K}ronecker product models.
\newblock {\em SIAM J.~Appl.~Algebra Geometry}, 1(1):126--151, 2017.
\newblock \href {https://doi.org/10.1137/16M1077489}
  {\path{doi:10.1137/16M1077489}}.

\bibitem{OV23}
A.~Oneto and N.~Vannieuwenhoven.
\newblock Hadamard-{H}itchcock decompositions: identifiability and computation,
  2023.
\newblock \href {https://arxiv.org/abs/2308.06597} {\path{arXiv:2308.06597}}.

\bibitem{LeRouxB08}
N.~Le Roux and Y.~Bengio.
\newblock Representational power of restricted {B}oltzmann machines and deep
  belief networks.
\newblock {\em Neural Computation}, 20(6):1631--1649, 2008.

\bibitem{SeigalM18}
A.~Seigal and G.~Mont\'ufar.
\newblock Mixtures and products in two graphical models.
\newblock {\em J.~Algebr.~Stat.}, 9(1):1--20, 2018.

\bibitem{smolensky86}
P.~Smolensky.
\newblock Information processing in dynamical systems: Foundations of harmony
  theory.
\newblock In D.~E. Rumelhart and J.~L. McClelland, editors, {\em Parallel
  Distributed Processing: Explorations in the Microstructure of Cognition, Vol.
  1: Foundations}, chapter~6, pages 194--281. MIT Press, Cambridge, MA, USA,
  1986.

\end{thebibliography}

\newpage
\appendix
\section{General condition for applying root and pole information} \label{Jac-cond}
Observe that the process~\eqref{ge-r} is effectively Gaussian elimination on the rows of the matrix $M$, which starts out lower-triangular with $1$'s on the diagonal. Carried further this yields:
\begin{theorem} \label{blowup}
Let $\pn_1,\dotsc,\pn_{\ell}$ be univariate 
rational functions and let $q_i(y) \coloneqq \prod_{j=1}^{\ell} \pn_i(y_j)$ for $i=1,\dotsc,\ell$. Let $\eta_1,\dotsc,\eta_{L}$ be the points which are 
roots or poles of any $\pn_i$. 
Then the mapping $(y_1,\dotsc,y_\ell)\mapsto (q_1(y),\dotsc,q_\ell(y))$ is locally identifiable if and only if the $\ell\times L$ matrix $M(\pn,\eta)$ with $(i,j)$'th entry $M(\pn_i,\eta_j)$, has rank $\ell$ over $\qset$.
\end{theorem}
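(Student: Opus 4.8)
The plan is to reduce everything to a single factorization of the Jacobian of the map $Q\colon(y_1,\dots,y_\ell)\mapsto(q_1(y),\dots,q_\ell(y))$ and then read off both implications. Work over $\cset$, writing each $\pn_i$ in factored form $\pn_i(x)=c_i\prod_{j=1}^{L}(x-\eta_j)^{M(\pn_i,\eta_j)}$, where $\eta_1,\dots,\eta_L$ are all the roots and poles of the $\pn_i$ (passing to $\cset$ is harmless, since local identifiability concerns only the dimension of the image; in the applications all data is real anyway). At any point $y$ whose coordinates are distinct and avoid $\{\eta_1,\dots,\eta_L\}$ one has $\partial q_i/\partial y_k=q_i(y)\,(\log\pn_i)'(y_k)$ and $(\log\pn_i)'(x)=\sum_{j}M(\pn_i,\eta_j)/(x-\eta_j)$, hence
\[
 \mathrm{Jac}(Q)(y)=\diag\!\bigl(q_1(y),\dots,q_\ell(y)\bigr)\cdot M(\pn,\eta)\cdot C(y),
\]
where $C(y)$ is the $L\times\ell$ Cauchy-type matrix with $(j,k)$ entry $1/(y_k-\eta_j)$.

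Necessity is then immediate. The matrix $M(\pn,\eta)$ has integer entries, so its rank over $\qset$ equals its rank over $\cset$; if that rank is less than $\ell$, the middle factor above is rank-deficient, so $\mathrm{Jac}(Q)(y)$ is singular for every $y$, the image of $Q$ has dimension less than $\ell$, and $Q$ is not locally identifiable. (Concretely, a $\qset$-linear relation among the rows of $M(\pn,\eta)$ yields, after clearing denominators, integers $c_i$ not all zero with $\prod_i\pn_i(x)^{c_i}$ having no finite zero or pole, i.e.\ constant; then $\prod_i q_i(y)^{c_i}$ is constant, confining the image to a hypersurface.)

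For sufficiency, assume $\rank M(\pn,\eta)=\ell$. Since the diagonal factor is invertible whenever no $y_k$ is an $\eta_j$, it suffices to exhibit one admissible $y$ at which $M(\pn,\eta)\,C(y)$ is nonsingular. I would do this by a perturbation, ``blowing up'' the evaluation-at-$\eta$ argument of Prop.~\ref{prop:identifiability of symmetric polynomials}: pick $\ell$ columns $j_1,\dots,j_\ell$ of $M(\pn,\eta)$ forming a nonsingular submatrix $M_S$ (possible since the column space is all of $\cset^\ell$), and set $y_k=\eta_{j_k}+t$ for a scalar parameter $t$. As $t\to0$ the $k$-th column of $C(y)$ is $t^{-1}e_{j_k}+O(1)$, so $C(y)=t^{-1}P+B(t)$ with $P$ the $L\times\ell$ matrix of columns $e_{j_1},\dots,e_{j_\ell}$ and $B(t)$ analytic; since $M(\pn,\eta)P=M_S$, this gives $\det\!\bigl(M(\pn,\eta)C(y)\bigr)=t^{-\ell}\bigl(\det M_S+O(t)\bigr)$, which is nonzero for every sufficiently small $t\neq0$ outside the finite set where some $y_k$ coincides with another $y_{k'}$ or with some $\eta_m$. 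Any such $t$ gives the point we want (a real one, when the data is real, by density). Equivalently, expand $\det(M(\pn,\eta)C(y))$ by Cauchy--Binet as $\sum_{S}\det(M_S)\det(C_S(y))$ over $\ell$-subsets $S$ and note the Cauchy determinants $\det(C_S(y))$ cannot conspire to vanish identically once some $\det(M_S)\neq0$; the substitution above is just the explicit witness.

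The content --- and essentially the only step needing care --- is this sufficiency genericity argument: producing an admissible evaluation point at which $\det(M(\pn,\eta)C(y))\neq0$, which the substitution $y_k=\eta_{j_k}+t$ handles transparently. Everything else is bookkeeping around the observation already flagged before the theorem, namely that the recursive construction~\eqref{ge-r} of the $r_i$ is exactly Gaussian elimination on the rows of $M(\pn,\eta)$, feasible over $\qset$ precisely when that matrix has rank $\ell$, together with the elementary logarithmic-derivative identity for $\mathrm{Jac}(Q)$ above.
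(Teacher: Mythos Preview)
Your proof is correct and takes a genuinely different route from the paper's.

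The central novelty in your argument is the logarithmic-derivative factorization
\[
\mathrm{Jac}(Q)(y)=\diag\bigl(q_1(y),\dots,q_\ell(y)\bigr)\cdot M(\pn,\eta)\cdot C(y),
\]
which the paper does not use. With this in hand, both directions become linear algebra: necessity is immediate from rank-deficiency of the middle factor, and for sufficiency you evaluate at the perturbed point $y_k=\eta_{j_k}+t$ and read off $\det(M(\pn,\eta)C(y))=t^{-\ell}(\det M_S+O(t))$, so the Jacobian is nonsingular for small $t\neq 0$. This stays entirely at the level of first derivatives.

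The paper instead performs integer row operations on $M(\pn,\eta)$ to bring its leading $\ell\times\ell$ block to diagonal form $D$, builds the corresponding multiplicative combinations $r_i=\prod_{i'}\pn_{i'}^{N_{ii'}}$ and $s_i(y)=\prod_j r_i(y_j)$, and then analyzes the \emph{higher-order} Taylor expansion of $s$ at $\eta$ itself: the leading term of $s_i(\eta+\ep)$ is proportional to $\ep_i^{D_{ii}}$, which is the ``blowup'' of the appendix title. This requires the $D_{ii}$-th derivative rather than the first, and then an argument that surjectivity of $s$ near $\eta$ transfers back to full-dimensionality of the image of $q$.

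Your approach is shorter and more elementary, and it avoids the Smith-normal-form style diagonalization altogether; it needs only the existence of \emph{some} nonsingular $\ell\times\ell$ minor of $M(\pn,\eta)$. The paper's approach, on the other hand, makes the local monomial structure of the map explicit and connects directly to the Gaussian-elimination construction~\eqref{ge-r} that motivates the appendix. The necessity arguments are essentially the same in both (an integer row relation forces $\prod_i q_i^{v_i}$ to be constant).
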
\begin{proof} 
\emph{Only If:} Let $v\in \zset^\ell$ be a linear dependence of the rows, $v \cdot M(\pn,\eta)=0$. Then 
$\prod_{i'=1}^\ell q_{i'}(y)^{v_{i'}}$ factors as 
$\prod_j \prod_{i'=1}^\ell \pn_{i'}(y_j)^{v_{i'}}$. By construction $\prod_{i'=1}^\ell \pn_{i'}(\eta_j)^{v_{i'}}$ is nonzero and finite for every $1\leq j \leq L$. 
Furthermore $\prod_{i'=1}^\ell \pn_{i'}(x)^{v_{i'}}$ is nonzero and finite for all $x \notin \{\eta_1,\ldots,\eta_L\}$ because for such $x$ every $\pn_{i'}(x)$ is nonzero and finite. Thus, $\prod_{i'=1}^\ell \pn_{i'}^{v_{i'}}$ is a rational function without finite roots
or poles, and therefore a nonzero constant. So
$\prod_{i'=1}^\ell q_{i'}(y)^{v_{i'}}$ is a nonzero constant. Consequently, the parameterized variety 
$q(y)=(q_1(y),\ldots,q_{\ell}(y))$ has
codimension at least $1$ in $\cset^\ell$.

\emph{If:} Without loss of generality suppose the submatrix of $M(\pn,\eta)$ in columns $1,\ldots,\ell$ is nonsingular. 
Let $N$ be a matrix with integer entries such that
$N \cdot M(\pn,\eta)=(D \mid B)$ where $D$ is a diagonal matrix with positive integer entries on the diagonal, and
$B$ is any $\ell \times (L-\ell)$ matrix; $\mid$ denotes concatenation. Define the rational functions 
\be r_i = \prod_{i'=1}^\ell \pn_{i'}^{N_{ii'}} \label{rprod} \ee
By construction, for $j\leq \ell$, $M(r_i,\eta_j)=D_{ij}$.
Define $s_i(y) \coloneqq \prod_{j=1}^\ell r_i(y_j)$ for $i=1,\dotsc,\ell$ so that $s_i$ is the product of $r_i$ evaluated at each indeterminate, just as $q_i$ is the product of $\pn_i$ evaluated at each indeterminate. Then 
\[ s_i(y)=\prod_{i'=1}^\ell q_{i'}(y)^{N_{ii'}} \]
Unlike in the proof of Prop.~\ref{prop:identifiability of symmetric polynomials}, it is not sufficient to consider the Jacobian of the mapping 
\[ (y_1,\dotsc,y_{\ell}) \mapsto (s_1,\dotsc,s_{\ell})\]
because this Jacobian is singular if any $D_{ii}>1$. However, we show the mapping is dimension-preserving by examining its expansion in a small neighborhood of $(\eta_1,\ldots,\eta_\ell)$. Observe, as in~\eqref{pspe}, that
\be \frac{\partial^k s_i}{\partial^k y_j}(\eta) = \Paren{\prod_{j'\neq j} r_i(\eta_{j'})} r^{(k)}_i(\eta_j). \label{pspe2} \ee
with $ r^{(k)}_i$ being the $k$'th derivative of $r_i$.
More generally, if $\vec{k}=(k_1,\ldots,k_\ell)$, $|\vec{k}|=\sum k_j$, $ s_i^{(\vec{k})}(y) = 
\frac{\partial^{k_1}}{\partial^{k_1} y_1} \cdots \frac{\partial^{k_\ell}}{\partial^{k_\ell} y_\ell} s_i(y)$,  then 
\[ s_i^{(\vec{k})}(\eta) = \prod_j r_i^{(k_j)}(\eta_j). \]
Let $s=(s_1,\ldots,s_\ell)$. 
In a small neighborhood of $\eta$, $s$ expands in terms of those nonzero partial derivatives $(\vec{k})$ for which $\vec{k}$ is minimal in the standard partial order on the nonnegative quadrant. For each $s_i$ this minimizer is unique, $(0,\ldots,0,D_{ii},0,\ldots,0)$. Thus (applying~\eqref{pspe2}), for small $\ep=(\ep_1,\ldots,\ep_\ell)$,
$s(\eta+\ep)$
  expands as 
\[ \Paren{s_1(\eta),\ldots,s_\ell(\eta)}+\Paren{\Paren{\prod_{j'\neq 1} r_1(\eta_{j'})} r^{(D_{11})}_1(\eta_1)\ep_1^{D_{11}},
\ldots,
\Paren{\prod_{j'\neq \ell} r_\ell(\eta_{j'})} r^{(D_{\ell \ell})}_\ell(\eta_\ell) \ep_\ell^{D_{\ell \ell}}
}\]
This mapping carries $\ep$ in a small open neighborhood of $0$ in $\cset^\ell$, onto an open neighborhood of $s(\eta)$. 
\end{proof}
\end{document}